\title{Sparse Overlapping Sets Lasso for Multitask Learning and its Application to fMRI Analysis}
\author{
Nikhil S. Rao$^\dagger$ \\
\texttt{nrao2@wisc.edu} \\
\And
Christopher R. Cox$^\#$ \\
\texttt{crcox@wisc.edu} \\
\AND
Robert D. Nowak$^\dagger$ \\
\texttt{nowak@ece.wisc.edu} \\
\And
Timothy T. Rogers$^\#$ \\
\texttt{ttrogers@wisc.edu} \\
\AND
$^\dagger$ Department of Electrical and Computer Engineering, $^\#$ Department of Psychology \\
University of Wisconsin- Madison
}
\newcommand{\G}{\mathcal{G}}
\newcommand{\N}{\mathcal{N}}
\newcommand{\R}{\mathbb{R}}
\newcommand{\vct}[1]{\bm{#1}}
\newcommand{\mtx}[1]{\bm{#1}}
\newcommand{\vx}{\vct{x}}
\newcommand{\hvx}{\hat{\vct{x}}}
\newcommand{\vy}{\vct{y}}
\newcommand{\vz}{\vct{z}}
\newcommand{\vv}{\vct{v}}
\newcommand{\va}{\vct{a}}
\newcommand{\vb}{\vct{b}}
\newcommand{\vw}{\vct{w}}
\newcommand{\vu}{\vct{u}}
\newcommand{\x}{\vct{x}}
\newcommand{\y}{\vct{y}}
\newcommand{\mPhi}{\mtx{\Phi}}
\newcommand{\mX}{\mtx{X}}
\newcommand{\hmX}{\hat{\mtx{X}}}
\newcommand{\T}{\mathcal{T}}
\newcommand{\W}{\mathcal{W}}
\renewcommand{\Pr}{\mathbb{P}}
\newcommand{\beq}{\begin{equation}}
\newcommand{\eeq}{\end{equation}}
\newtheorem{theorem}{Theorem}[section]
\newtheorem{lemma}[theorem]{Lemma}
\newtheorem{definition}[theorem]{Definition}
\def \endprf{\hfill {\vrule height6pt width6pt depth0pt}\medskip}
\newenvironment{proof}{\noindent {\bf Proof} }{\endprf\par}
\begin{document}

\maketitle
\vspace{-2mm}
\begin{abstract}
Multitask learning can be effective when features useful in one task are also useful for other tasks, and the group lasso is a standard method for selecting a common subset of features. In this paper, we are interested in a less restrictive form of multitask learning, wherein (1) the available features can be organized into subsets according to a notion of similarity and (2) features useful in one task are similar, but not necessarily identical, to the features best suited for other tasks. The main contribution of this paper is a new procedure called {\em Sparse Overlapping Sets (SOS) lasso}, a convex optimization that automatically selects similar features for related learning tasks.  Error bounds are derived for SOSlasso and its consistency is established for squared error loss. In particular,  SOSlasso is motivated by multi-subject fMRI studies in which functional activity is classified using brain voxels as features. Experiments with real and synthetic data demonstrate the advantages of SOSlasso compared to the lasso and group lasso. 

\end{abstract}
\vspace{-3mm}
\section{Introduction}
\label{sec:intro}
\vspace{-3mm}
Multitask learning exploits the relationships between several learning tasks in order to improve performance, which is especially useful if a common subset of features are useful for all tasks at hand.  The group lasso (Glasso) \cite{yuanlin, lounicimtl} is naturally suited for this situation: if a feature is selected for one task, then it is selected for all tasks. This may be too restrictive in many applications, and this motivates a less rigid approach to multitask feature selection.  Suppose that the available features can be organized into overlapping subsets according to a notion of similarity, and that the features useful in one task are similar, but not necessarily identical, to those best suited for other tasks.  In other words, a feature that is useful for one task suggests that the subset it belongs to may contain the features useful in other tasks (Figure \ref{fig:spatterns}). 


In this paper, we introduce the \emph{sparse overlapping sets lasso} (SOSlasso), a convex program to recover the sparsity patterns corresponding to the situations explained above. SOSlasso generalizes lasso \cite{tibshirani} and Glasso, effectively spanning the range between these two well-known procedures.  SOSlasso is capable of exploiting the similarities between useful features across tasks, but unlike Glasso it does not force different tasks to use exactly the same features. It produces sparse solutions, but unlike lasso it encourages similar patterns of sparsity across tasks. Sparse group lasso \cite{sgl} is a special case of SOSlasso that only applies to disjoint sets, a significant limitation when features cannot be easily partitioned, as is the case of our motivating example in fMRI. The main contribution of this paper is a theoretical analysis of SOSlasso, which also covers sparse group lasso as a special case (further differentiating us from \cite{sgl}). The performance of SOSlasso is analyzed, error bounds are derived for general loss functions, and its consistency is shown for squared error loss.  Experiments with real and synthetic data demonstrate the advantages of SOSlasso relative to lasso and Glasso. 

\subsection{Sparse Overlapping Sets}

SOSlasso encourages sparsity patterns that are similar, but not identical, across tasks.  This is accomplished by decomposing the features of each task into groups $G_1 \dots G_M$, where $M$ is the same for each task, and $G_i$ is a set of features that can be considered similar across tasks. Conceptually, SOSlasso first selects subsets that are most useful for all tasks, and then identifies a unique sparse solution for each task drawing only from features in the selected subsets.  In the fMRI application discussed later, the subsets are simply clusters of adjacent spatial data points (voxels) in the brains of multiple subjects. Figure \ref{fig:spatterns} shows an example of the patterns that typically arise in sparse multitask learning applications, where rows indicate features and columns correspond to tasks. 

Past work has focused on recovering variables that exhibit within and across group sparsity, when the groups do not overlap \cite{sgl}, finding application in genetics, handwritten character recognition \cite{sprechmann} and climate and oceanography \cite{sglclimate}. Along related lines, the exclusive lasso \cite{exlasso} can be used when it is explicitly known that variables in certain sets are negatively correlated.

\begin{figure}[!h]
\centering
\subfigure[Sparse]{
\includegraphics[width = 20mm, height = 29mm]{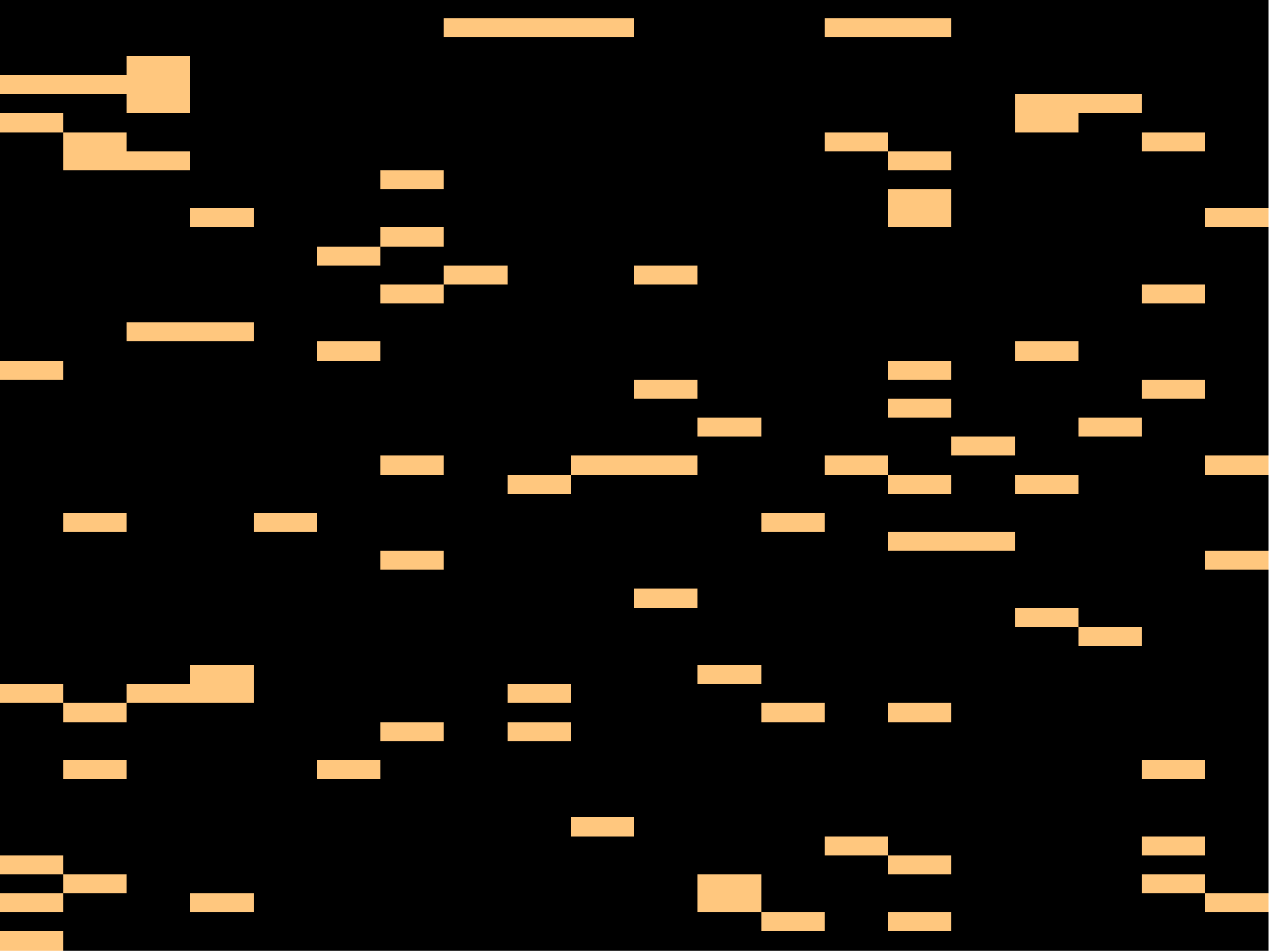}
\label{fig:lpattern}}
\qquad
\subfigure[Group sparse ]{
\includegraphics[width = 20mm, height = 29mm]{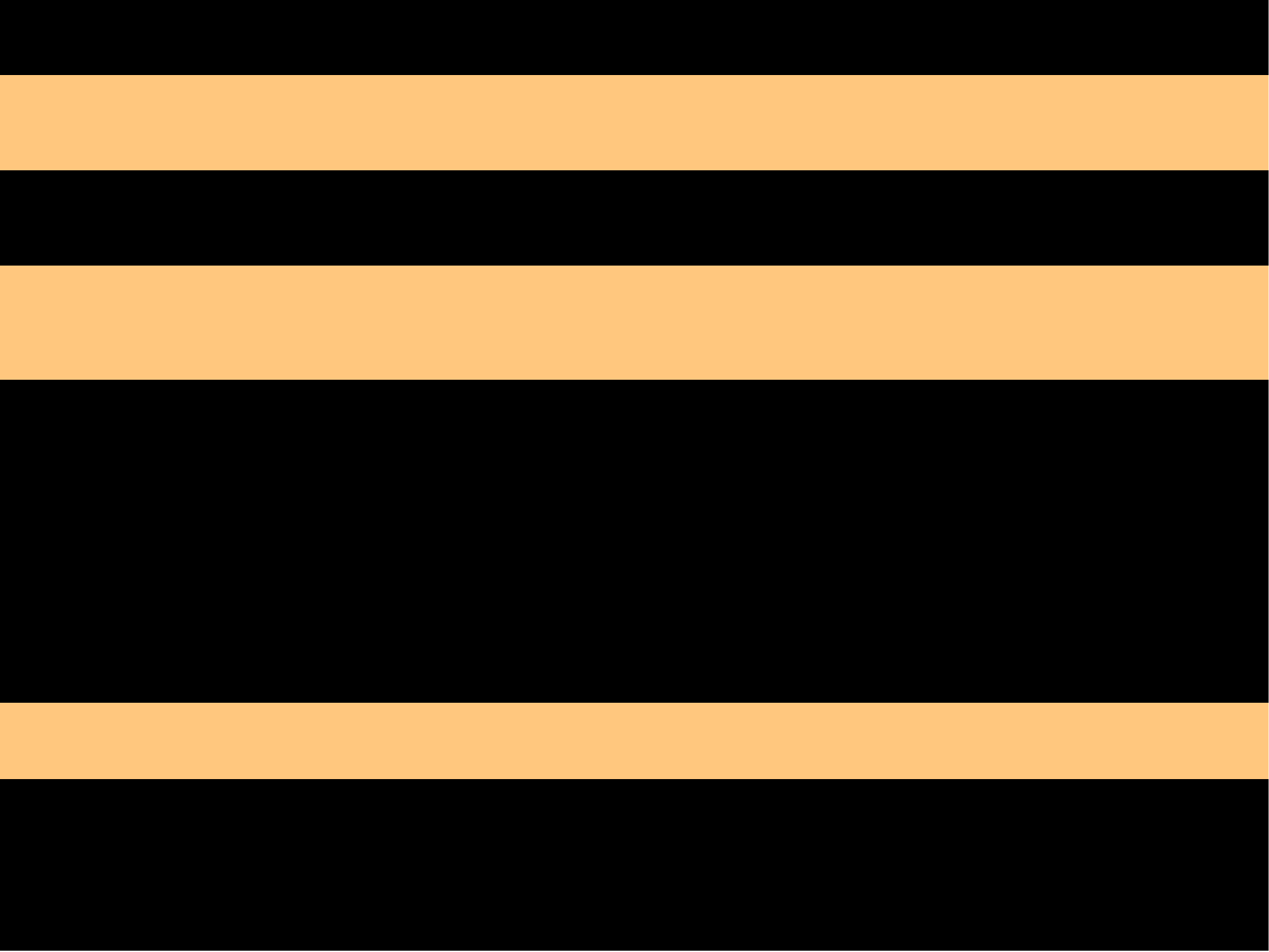}
\label{fig:glpattern}}
\qquad
\subfigure[Group sparse {\em plus} sparse ]{
\includegraphics[width = 20mm, height = 29mm]{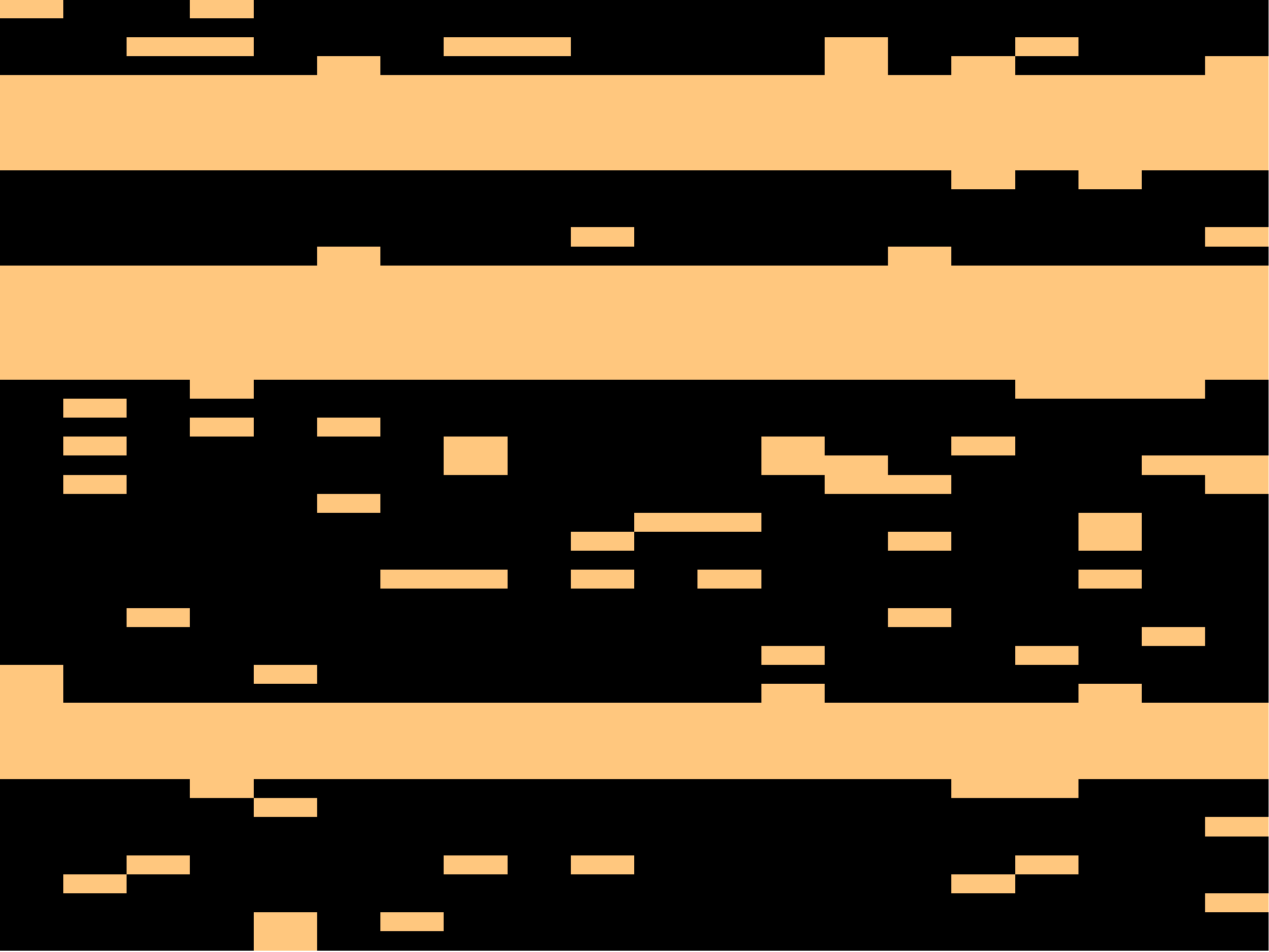}
\label{fig:dirtypattern}}
\qquad
\subfigure[Group sparse {\em and} sparse ]{
\includegraphics[width = 20mm, height = 29mm]{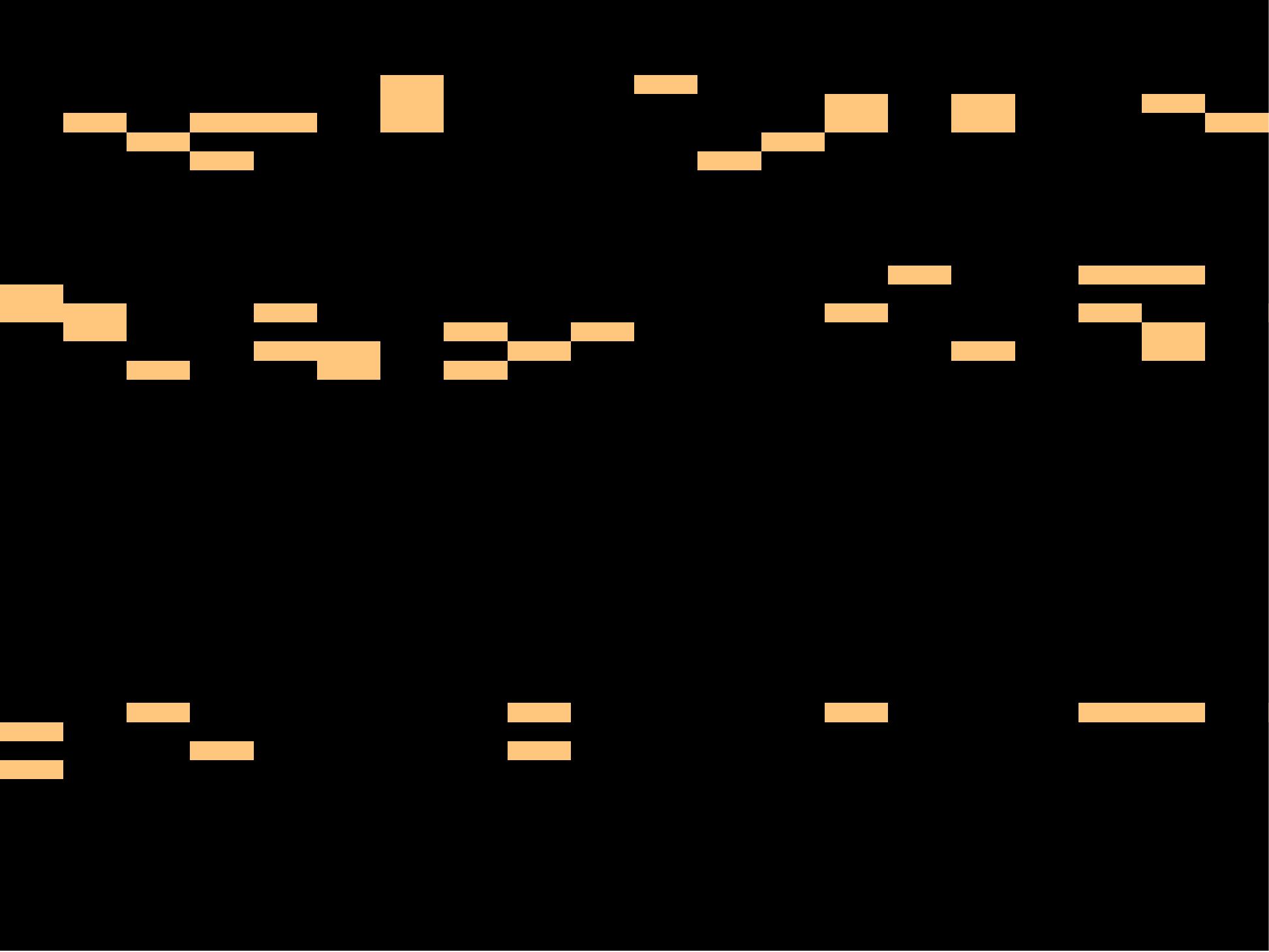}
\label{fig:sglpattern}}
\caption{ A comparison of different sparsity patterns.  \subref{fig:lpattern} shows a standard sparsity pattern. An example of group sparse patterns promoted by Glasso \cite{yuanlin} is shown in \subref{fig:glpattern}. In \subref{fig:dirtypattern}, we show the patterns considered in \cite{dirty}. Finally, in \subref{fig:sglpattern}, we show the patterns we are interested in this paper.}
\label{fig:spatterns}
\end{figure}
\vspace{-5mm}

\subsection{fMRI Applications}
\label{fmriapp}
In psychological studies involving fMRI, multiple participants are scanned while subjected to exactly the same experimental manipulations. Cognitive Neuroscientists are interested in identifying the patterns of activity associated with different cognitive states, and construct a model of the activity that accurately predicts the cognitive state evoked on novel trials. In these datasets, it is reasonable to expect that the same general areas of the brain will respond to the manipulation in every participant. However, the specific patterns of activity in these regions will vary, both because neural codes can vary by participant \cite{feredoes} and because brains vary in size and shape, rendering neuroanatomy only an approximate guide to the location of relevant information across individuals. In short, a voxel useful for prediction in one participant suggests the general anatomical neighborhood where useful voxels may be found, but not the precise voxel. While logistic Glasso \cite{heskesgroupwise}, lasso \cite{logitbrain}, and the elastic net penalty \cite{rishsparse} have been applied to neuroimaging data, these methods do not exclusively take into account both the common macrostructure and the differences in microstructure across brains.  SOSlasso, in contrast, lends itself well to such a scenario, as we will see from our experiments. 
\vspace{-3mm}

\subsection{Organization}
\vspace{-1mm}
The rest of the paper is organized as follows: in Section \ref{sec:notations}, we outline the notations that we will use and formally set up the problem. We also introduce the SOSlasso regularizer. We derive certain key properties of the regularizer in Section \ref{sec:general}. In Section \ref{sec:consis}, we specialize the problem to the multitask linear regression setting (\ref{eq:opt2}), and derive consistency rates for the same, leveraging ideas from \cite{Mest}. We outline experiments performed on simulated data in Section \ref{sec:expts}. In this section, we also perform logistic regression on fMRI data, and argue that the use of the SOSlasso 
yields interpretable multivariate solutions compared to Glasso and lasso.

\section{Sparse Overlapping Sets Lasso}
\label{sec:notations}
\vspace{-2mm}
We formalize the notations used in the sequel. Lowercase and uppercase bold letters indicate vectors and matrices respectively. We assume a multitask learning framework, with a data matrix $\mPhi_t \in \R^{n \times p}$ for each task $t \in \{1,2,\ldots,\T \}$. We assume there exists a vector $\vx^\star_t \in \R^p$ such that measurements obtained are of the form $\vy_t = \mPhi_t \vx^\star_t + \eta_t ~\ \eta_t \sim \N(0,\sigma^2 \mtx{I})$. Let $\mX^\star := \left[ \vx_1^\star ~\ \vx_2^\star ~\ \ldots \vx_\T^\star \right] \in \R^{p \times \T}$. 
Suppose we are given $M$ (possibly overlapping) groups $\tilde{\G} = \{ \tilde{G}_1, \tilde{G}_2, \ldots, \tilde{G}_M \}$, so that $\tilde{G}_i \subset \{ 1,2,\ldots , p \} ~\ \forall i$, of maximum size $B$. These groups contain sets of ``similar" features, the notion of similarity being application dependent. We assume that all but $k \ll M$ groups are identically zero. Among the active groups, we further assume that at most only a fraction $\alpha \in (0,1)$ of the coefficients per group are non zero. We consider the following optimization program in this paper
\begin{equation}
\label{eq:opt}
\hmX = \arg \min_{\vx} \left\{ \sum_{t = 1}^\T \mathcal{L}_{\mPhi_t}(\vx_t)  + \lambda_n h(\vx) \right\}
\end{equation}
where $\vx = [\vx_1^T \vx_2^T \ldots \vx_\T^T]^T$, $h(\vx)$ is a regularizer and $\mathcal{L}_t := \mathcal{L}_{\mPhi_t}(\vx_t)$ denotes the loss function, whose value depends on the data matrix $\mPhi_t$. We consider least squares and logistic loss functions. In the least squares setting, we have $\mathcal{L}_t = \frac{1}{2n} \| \y_t - \mPhi_t \x_t \|^2$. We reformulate the optimization problem  (\ref{eq:opt}) with the least squares loss as 
\beq
\label{eq:opt2}
\widehat{\vx} = \arg \min_{\vx} \left\{ \frac{1}{2n} \| \vy - \mPhi \vx \|^2_2 + \lambda_n h(\vx) \right\}
\eeq
where $\vy  = [\vy_1^T \vy_2^T \ldots \vy_\T^T]^T$ and the block diagonal matrix $\mPhi$ is formed by block concatenating the $\mPhi_t's$. We use this reformulation for ease of exposition (see also \cite{lounicimtl} and references therein). Note that $\vx \in \R^{\T p}, ~\ \vy \in \R^{\T n}, ~\ \mbox{ and } \mPhi \in \R^{\T n \times \T p}$. We also define $ \G = \{G_1, G_2 , \ldots, G_M \}$ to be the set of groups defined on $ \R^{\T p} $ formed by aggregating the rows of $\mX$ that were originally in $\tilde{\G}$, so that $\vx$ is composed of groups $G \in \G$. 

We next define a regularizer $h$ that promotes sparsity both within and across overlapping sets of similar features:
\begin{equation}
\label{eq:reg}
h(\vx) = \inf_{\W} \sum_{G \in \G} \left( \alpha_G \|\vw_G\|_2 +  \| \vw_G\|_1 \right) ~\ \textbf{s.t. } ~\ \sum_{G \in \G} \vw_G = \vx
\end{equation}
where the $\alpha_G>0$ are constants that balance the tradeoff between the group norms and the $\ell_1$ norm. Each $\vw_G$ has the same size as $\vx$, with support restricted to the variables indexed by group $G$. $\W$ is a set of vectors, where each vector has a support restricted to one of the groups $G \in \G$:
\[ \W = \{ \vw_G \in \R^{\T p} | ~\ [\vw_G]_i = 0 \mbox{ if } i \notin G \} \]  
where $[\vw_G]_i$ is the $i^{th}$ coefficient of $\vw_G$. The {\em SOSlasso} is the optimization in (\ref{eq:opt}) with $h(\vx)$ as defined in (\ref{eq:reg}).

We say the set of vectors $\vw_G$ is  an optimal decomposition of $\vx$ if they achieve the $\inf$ in (\ref{eq:reg}). The objective function in (\ref{eq:reg}) is convex and coercive. Hence,  $ \forall \vx$, an optimal decomposition always exists. 

As the $\alpha_G \rightarrow \infty$ the $\ell_1$ term becomes redundant, reducing $h(\vx)$ to the overlapping group lasso penalty introduced in \cite{jacob}, and studied in \cite{latent, nraistats}. When the $\alpha_G \rightarrow 0$, the overlapping group lasso term vanishes and $h(\vx)$ reduces to the lasso penalty. We consider $\alpha_G = 1 ~\ \forall G$. All the results in the paper can be easily modified to incorporate different settings for the $\alpha_G$. 
\begin{table}[!h]
  \centering
\begin{tabular}{ || c | c | c | c | c || }
\hline
\textbf{Support} & \textbf{Values}  & $\sum_{G} \|\vx_G \|_2$ & $\| \vx \|_1$ & $\sum_{G} \left( \| \vx_G \|_2 + \|\vx_G \|_1 \right)$  \\
  \hline                       
  $\{1,4,9\}$ & $\{3,4,7\}$  &  $12$ & $14$  & $26$ \\
  \hline
  $\{1,2,3,4,5\}$ & $\{2,5,2,4,5\}$  &  $8.602$ & $18$  & $26.602$ \\
  \hline  
  $\{1,3,4\}$ & $\{3,4,7\}$  &  $8.602$ & $14$  & $22.602$ \\
  \hline
\end{tabular}
  \caption{Different instances of a 10-d vector and their corresponding norms.}
 \label{tab:which}
  \end{table}
 
The example in Table \ref{tab:which} gives an insight into the kind of sparsity patterns preferred by the function $h(\vx)$. The optimization problems (\ref{eq:opt}) and (\ref{eq:opt2}) will prefer solutions that have a small value of $h(\cdot)$. Consider 3 instances of $\vx \in \R^{10}$, and the corresponding group lasso, $\ell_1$, and $h(\vx)$ function values. The vector is assumed to be made up of two groups, $G_1 = \{1,2,3,4,5 \} \mbox{ and } G_2 = \{6,7,8,9,10 \}$. $h(\vx)$ is smallest when the support set is sparse within groups, and also when only one of the two groups is selected. The $\ell_1$ norm does not take into account sparsity across groups, while the group lasso norm does not take into account sparsity within groups.

To solve (\ref{eq:opt}) and (\ref{eq:opt2}) with the regularizer proposed in (\ref{eq:reg}), we use the covariate duplication method of \cite{jacob}, to reduce the problem to a non overlapping sparse group lasso problem. We then use proximal point methods \cite{bachhierarchical} in conjunction with the MALSAR \cite{malsar} package to solve the optimization problem. 

\vspace{-2mm}
\section{Error Bounds for SOSlasso with General Loss Functions}
\label{sec:general} \vspace{-2mm}
We derive certain key properties of the regularizer $h(\cdot)$ in (\ref{eq:reg}), independent of the loss function used. 

\begin{lemma}
\label{lem:norm}
The function $h(\vx)$ in (\ref{eq:reg}) is a norm
\end{lemma}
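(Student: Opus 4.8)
The plan is to verify the three norm axioms directly from the definition
\[
h(\vx) = \inf_{\W}\ \sum_{G \in \G}\Bigl( \|\vw_G\|_2 + \|\vw_G\|_1 \Bigr)
\quad\textbf{s.t.}\quad \sum_{G \in \G}\vw_G = \vx,
\]
treating $h$ as an infimal convolution of the ``atomic'' norms $\mu_G(\vv) := \|\vv\|_2 + \|\vv\|_1$ restricted to coordinates in $G$ (with $\mu_G(\vv)=+\infty$ if $\vv$ is not supported on $G$). Since each $\mu_G$ is itself a norm on the subspace $\{\vv : [\vv]_i = 0,\ i\notin G\}$ (being a sum of two norms), the result should follow from the general fact that an infimal convolution of a finite collection of (semi)norms whose effective domains together span $\R^{\T p}$ is again a norm. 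I would, however, spell this out rather than cite it, since the groups overlap and one wants to be careful about finiteness and the definiteness axiom.

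First, finiteness and nonnegativity: for any $\vx$, because the groups $G\in\G$ cover $\{1,\dots,\T p\}$ (the aggregated groups inherit the covering property from $\tilde\G$ applied across tasks — or, if not literally covering, one can always take at least one feasible decomposition), there is at least one feasible $\W$, so the infimum is over a nonempty set; each summand is $\ge 0$, so $h(\vx)\ge 0$ and $h(\vx)<\infty$. Second, absolute homogeneity: for $c\neq 0$, the map $\vw_G \mapsto c\,\vw_G$ is a bijection between decompositions of $\vx$ and decompositions of $c\vx$, and scales each $\mu_G$ term by $|c|$, so $h(c\vx)=|c|\,h(\vx)$; the case $c=0$ is immediate (take all $\vw_G=\vct{0}$). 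Third, subadditivity: given $\vx,\vy$ and $\varepsilon>0$, pick near-optimal decompositions $\{\vw_G\}$ of $\vx$ and $\{\vv_G\}$ of $\vy$; then $\{\vw_G+\vv_G\}$ is feasible for $\vx+\vy$, and since each $\mu_G$ is a norm on its subspace, $\mu_G(\vw_G+\vv_G)\le \mu_G(\vw_G)+\mu_G(\vv_G)$, whence $h(\vx+\vy)\le h(\vx)+h(\vy)+2\varepsilon$; let $\varepsilon\to 0$.

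The one step needing genuine care — and the main obstacle — is the definiteness axiom: $h(\vx)=0 \implies \vx = \vct{0}$. Here I would use the fact that $\mu_G(\vw_G) \ge \|\vw_G\|_1 \ge \|\vw_G\|_\infty$, so for any feasible decomposition $\sum_G \mu_G(\vw_G) \ge \sum_G \|\vw_G\|_1 \ge \|\sum_G \vw_G\|_1 = \|\vx\|_1$ by the triangle inequality for $\ell_1$. Taking the infimum over decompositions gives $h(\vx) \ge \|\vx\|_1$, so $h(\vx)=0$ forces $\|\vx\|_1=0$, i.e. $\vx=\vct{0}$. (As a bonus this inequality, together with an easy upper bound $h(\vx)\le (1+\sqrt{B})\sum_G\|\vx_G\|_2$ — or a similar crude bound obtained from any fixed feasible decomposition — shows $h$ is equivalent to a genuine norm, reinforcing the claim.) Since the infimum defining $h$ is attained (the objective in (\ref{eq:reg}) is convex and coercive, as already noted in the text), there are no subtleties about whether ``$\inf$'' should be ``$\min$'', and the three axioms above complete the proof.
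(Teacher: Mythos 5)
Your proposal is correct and follows essentially the same route as the paper: a direct verification of the norm axioms, with homogeneity obtained by rescaling (near-)optimal decompositions and subadditivity from the fact that the sum of feasible decompositions of $\vx$ and $\vy$ is feasible for $\vx+\vy$. Your explicit argument for definiteness via the lower bound $h(\vx)\ge\|\vx\|_1$ is a welcome detail that the paper dismisses as trivial, but it does not change the overall approach.
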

\vspace{-2mm}
The proof follows from basic properties of norms and because if $\vw_G, \vv_G$ are optimal decompositions of $\vx, \vy$, then it does not imply that $\vw_G + \vv_G$ is an optimal decomposition of $\vx+\vy$. For a detailed proof, please refer to the supplementary material. 

The dual norm of $h(\vx)$ can be bounded as
\begin{align}
\notag
h^*(\vu) &= \max_{\vx} \{ \vx^T \vu \} ~\ \textbf{s.t.} ~\ h(\vx) \leq 1 \\
\notag
&= \max_{\W} \{ \sum_{G \in \G} \vw_G^T \vu_G  \} ~\ \textbf{s.t.} ~\ \sum_{G \in \G} (\|\vw_G\|_2 + \| \vw_G\|_1) \leq 1\\
\notag
&\stackrel{(i)}{\leq} \max_{\W} \{ \sum_{G \in \G} \vw_G^T \vu_G  \} ~\ \textbf{s.t.} ~\ \sum_{G \in \G} 2 \| \vw_G\|_2 \leq 1 \\
\notag
&= \max_{\W} \{ \sum_{G \in \G} \vw_G^T \vu_G  \} ~\ \textbf{s.t.} ~\ \sum_{G \in \G}  \| \vw_G\|_2 \leq \frac12\\
\label{eq:dualineq}
\Rightarrow h^*(\vu) &\leq  \max_{G \in \G} \frac{1}{2} \|\vu_G \|_2
\end{align}
(i) follows from the fact that the constraint set in (i) is a superset of the constraint set in the previous statement, since $\|\va\|_2 \leq \|\va\|_1$.  (\ref{eq:dualineq}) follows from noting that the maximum is obtained by setting $\vw_{G^*} = \frac{\vu_{G^*}}{2 \| \vu_{G^*} \|_2}$, where $G^* = \arg \max_{G \in \G} \| \vu_G \|_2$. 
The inequality (\ref{eq:dualineq}) is far more tractable than the actual dual norm, and will be useful in our derivations below.
Since $h(\cdot)$ is a norm, we can apply methods developed in \cite{Mest} to derive consistency rates for the optimization problems (\ref{eq:opt}) and (\ref{eq:opt2}). We will use the same notations as in \cite{Mest} wherever possible.

\begin{definition}
\label{def:decomposable}
A norm $h(\cdot)$ is decomposable with respect to the subspace pair $sA \subset sB$ if
 $h(\va + \vb) = h(\va) + h(\vb) ~\ \forall \va \in sA , \vb \in sB^\perp $.
\end{definition}

\begin{lemma}
\label{lem:decomposable}
Let $\vx^\star \in \R^p$ be a vector that can be decomposed into (overlapping) groups with within-group sparsity. Let $\G^\star \subset \G$ be the set of active groups of $\vx^\star$. Let $S = supp(\vx^\star)$ indicate the support set of $\vx$. Let $sA$ be the subspace spanned by the coordinates indexed by $S$, and let $sB = sA$. We then have that the norm in (\ref{eq:reg}) is decomposable with respect to $sA,sB$
\end{lemma}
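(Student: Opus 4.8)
The plan is to use that $h(\cdot)$ is a norm (Lemma~\ref{lem:norm}), which makes one direction of the claimed identity free, and then to obtain the reverse inequality by manipulating optimal decompositions. Concretely, I want to show $h(\va+\vb) = h(\va)+h(\vb)$ for every $\va \in sA$ and $\vb \in sB^\perp = sA^\perp$, i.e. for $\va$ supported on $S$ and $\vb$ supported on $S^c$. Since $h$ is a norm, $h(\va+\vb) \le h(\va)+h(\vb)$ is just the triangle inequality, so the whole content is the lower bound $h(\va+\vb) \ge h(\va)+h(\vb)$. To keep the bookkeeping with overlapping groups manageable, I expect it is cleanest to first pass to the covariate-duplication reformulation of \cite{jacob}, where $\G$ becomes a partition and $h$ becomes the block-separable penalty $\sum_{G}\big(\|\cdot_G\|_2 + \|\cdot_G\|_1\big)$, prove decomposability there, and then pull it back.

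For the lower bound, I would start from an optimal decomposition $\{\vw_G\}_{G\in\G}$ of $\vx := \va+\vb$ achieving the infimum in (\ref{eq:reg}), and split each piece according to the support: $\vw_G = \vw_G^{S} + \vw_G^{S^c}$, the restrictions of $\vw_G$ to $S$ and to $S^c$. Because $\va$ lives on $S$ and $\vb$ on $S^c$, the families $\{\vw_G^{S}\}$ and $\{\vw_G^{S^c}\}$ are feasible decompositions of $\va$ and $\vb$ (the support constraints $\operatorname{supp}(\vw_G)\subseteq G$ are inherited), so $h(\va) \le \sum_G(\|\vw_G^{S}\|_2 + \|\vw_G^{S}\|_1)$ and likewise for $h(\vb)$. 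Adding these, and using that the $\ell_1$ term splits exactly over disjoint supports, $\|\vw_G^{S}\|_1 + \|\vw_G^{S^c}\|_1 = \|\vw_G\|_1$, the claim reduces to showing that the group ($\ell_2$) terms also split, i.e. $\sum_G\big(\|\vw_G^{S}\|_2 + \|\vw_G^{S^c}\|_2\big) \le \sum_G \|\vw_G\|_2$.

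The group term is the crux, since in general $\|\vw_G^{S}\|_2 + \|\vw_G^{S^c}\|_2 \ge \|\vw_G\|_2$ with equality only when one of the two restrictions vanishes. So the argument only closes if there is an optimal decomposition of $\vx$ in which no single group carries mass both inside $S$ and inside $S^c$. I would establish this using the structure of $\vx^\star$: because $S$ is the support of a vector that decomposes with within-group sparsity over the active groups $\G^\star$, the pair of subspaces is aligned with whole (active versus inactive) groups, and any group that contributes simultaneously to the $sA$-part and the $sB^\perp$-part can be replaced — without changing $\vx$ and without increasing the objective — by contributions confined to one side; after this reduction the $\ell_2$ terms split term by term and $h(\va+\vb)=h(\va)+h(\vb)$ follows. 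The main obstacle is precisely this step: reconciling the non-additivity of the group norm with the overlapping, within-group-sparse structure (equivalently, checking that in the duplicated picture the support of the optimal lift sits in whole active blocks). Everything else — the triangle inequality, restricting a decomposition, and the exact additivity of the $\ell_1$ part — is routine.
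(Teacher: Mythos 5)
Your high-level reading is right, and you have in fact located the real difficulty more precisely than the paper's own proof does; but neither your proposed repair nor the paper's argument closes it. The paper argues in the opposite direction from you: it takes optimal decompositions $\{\vw^A_G\}$ of $\va$ and $\{\vw^B_G\}$ of $\vb$, observes that their supports are disjoint (an optimal decomposition never puts mass outside the support of the vector it decomposes), and then simply asserts that the combined family achieves $h(\va+\vb)$, yielding $h(\va)+h(\vb)=h(\va+\vb)$ as a chain of equalities. That last assertion is exactly the claim you could not establish from your side: that no decomposition of $\va+\vb$ does strictly better by letting a single group $G$ carry mass on both $S\cap G$ and $S^c\cap G$, paying only $\|\vw_G\|_2=\bigl(\|\vw_G^{S}\|_2^2+\|\vw_G^{S^c}\|_2^2\bigr)^{1/2}$ rather than $\|\vw_G^{S}\|_2+\|\vw_G^{S^c}\|_2$. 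Your candidate fix --- replacing a straddling $\vw_G$ by contributions confined to one side ``without increasing the objective'' --- goes the wrong way: splitting $\vw_G$ into $\vw_G^{S}+\vw_G^{S^c}$ strictly \emph{increases} the group-norm term whenever both pieces are nonzero, and with overlapping groups there need not exist any other group available to absorb one of the pieces.

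The obstacle is not merely expository, so no bookkeeping (covariate duplication included) will remove it. Take $p=2$, a single group $G=\{1,2\}$, $\vx^\star=(1,0)$, so $\G^\star=\{G\}$, the within-group sparsity is $\alpha=1/2$, and $S=\{1\}$. For $\va=(1,0)\in sA$ and $\vb=(0,1)\in sB^\perp$ the only available decomposition of any vector uses the single group, so $h(\va)=h(\vb)=2$ while $h(\va+\vb)=\sqrt{2}+2<4$. Hence decomposability in the sense of Definition~\ref{def:decomposable} fails for the support subspace whenever an active group straddles $S$ and $S^c$ --- precisely the within-group-sparse regime the lemma is meant to cover. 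So the step you flagged as the crux genuinely fails; a correct statement must either redefine $sA$, $sB$ (e.g.\ as spans of whole active groups, at the cost of a larger compatibility constant) or settle for the one-sided inequality $h(\va+\vb)\le h(\va)+h(\vb)$, which is all that either your argument or the paper's actually delivers. The routine parts of your plan (triangle inequality, exact additivity of the $\ell_1$ part over disjoint supports, feasibility of restricted decompositions) are fine.
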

The result follows in a straightforward way from noting that supports of decompositions for vectors in $sA$ and $sB^\perp$ do not overlap. We defer the proof to the supplementary material.

\begin{definition}
\label{def:compatibility}
Given a subspace $sB$, the subspace compatibility constant with respect to a norm $\| ~\ \|$ is given by
\[
\Psi(B) = \sup \left\{ \frac{h(\vx)}{\|\vx \|} ~\ \forall \vx \in sB\backslash\{ \vct{0} \} \right\}
\]
\end{definition}

\begin{lemma}
\label{lem:compatibility}
Consider a vector $\vx$ that can be decomposed into $\G^\star \subset \G$ active groups. Suppose the maximum group size is $B$, and also assume that a fraction $\alpha \in (0,1)$ of the coordinates in each active group is non zero.  Then,
\[
h(\vx) \leq (1+ \sqrt{B\alpha} ) \sqrt{|\G^\star|} \| \vx \|_2
\]
\end{lemma}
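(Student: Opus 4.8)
The plan is to exhibit one feasible decomposition $\{\vw_G\}_{G\in\G}$ of $\vx$ and bound $h(\vx)$ by the value of the objective in (\ref{eq:reg}) at that decomposition; since $h$ is defined as an infimum, any feasible choice yields an upper bound. Because $\vx$ is supported on the active groups $\G^\star$, I would take the decomposition to be supported only on $\G^\star$, and in fact choose one with pairwise disjoint supports: assign each coordinate $i\in \mathrm{supp}(\vx)$ to exactly one group $G\in\G^\star$ with $i\in G$, set $[\vw_G]_i=[\vx]_i$ for that group and $0$ elsewhere. This single choice handles the overlap between groups, which is the only place the argument needs care. With it, $\sum_{G\in\G^\star}\vw_G=\vx$, the supports of the $\vw_G$ partition $\mathrm{supp}(\vx)$, and each $\vw_G$ is supported inside $\mathrm{supp}(\vx)\cap G$, a set of cardinality at most $\alpha B$ by the within-group sparsity assumption together with the maximum group size $B$.

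Next I would bound the two pieces of $\sum_{G\in\G^\star}\left(\|\vw_G\|_2+\|\vw_G\|_1\right)$ separately. For the group-norm part, disjointness of supports gives $\sum_{G\in\G^\star}\|\vw_G\|_2^2=\|\vx\|_2^2$, so Cauchy--Schwarz over the $|\G^\star|$ terms yields $\sum_{G\in\G^\star}\|\vw_G\|_2\le \sqrt{|\G^\star|}\,\|\vx\|_2$. For the $\ell_1$ part, since $\vw_G$ has at most $\alpha B$ nonzero entries, the elementary inequality $\|\vw_G\|_1\le\sqrt{\|\vw_G\|_0}\,\|\vw_G\|_2\le\sqrt{\alpha B}\,\|\vw_G\|_2$ applies; summing and reusing the previous estimate gives $\sum_{G\in\G^\star}\|\vw_G\|_1\le \sqrt{\alpha B}\,\sqrt{|\G^\star|}\,\|\vx\|_2$. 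Adding the two bounds yields $h(\vx)\le (1+\sqrt{\alpha B})\,\sqrt{|\G^\star|}\,\|\vx\|_2$, which is the claim.

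I do not anticipate a real obstacle: the only subtlety is the overlapping-groups bookkeeping in the first step, namely ensuring that a valid disjoint partition of $\mathrm{supp}(\vx)$ across $\G^\star$ exists and that each assigned block still obeys the $\alpha B$ cardinality bound; everything after that is Cauchy--Schwarz and $\|\vv\|_1\le\sqrt{\|\vv\|_0}\,\|\vv\|_2$. I would also remark that, taking $sB=sA=\mathrm{span}(S)$ with $S=\mathrm{supp}(\vx^\star)$, the same estimate bounds the subspace compatibility constant $\Psi(sB)$ of Definition \ref{def:compatibility}, since every vector in $sB$ inherits an at-most-$\alpha B$ within-group support pattern; this is precisely the quantity needed to apply the framework of \cite{Mest} in the next section.
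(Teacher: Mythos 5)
Your proposal is correct and follows essentially the same route as the paper's proof: choose a disjoint-support decomposition of $\vx$ over the active groups, bound $\|\vw_G\|_1\le\sqrt{B\alpha}\,\|\vw_G\|_2$ via the sparsity of each $\vw_G$, and apply Cauchy--Schwarz together with $\sum_G\|\vw_G\|_2^2=\|\vx\|_2^2$. You merely spell out the bookkeeping (assigning each support coordinate to a single active group) that the paper states without detail.
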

\begin{proof}
For any vector $\vx$ with $supp(\vx) \subset \G^\star$, there exists a representation $\vx = \sum_{G \in \G^\star} \vw_G$, such that the supports of the different $\vw_G$ do not overlap. Then, 
\begin{align*}
h(\vx) &\leq \sum_{G \in \G^\star} (\| \vw_G \|_2 + \| \vw_G \|_1) 
~\ \leq (1 + \sqrt{B\alpha}) \sum_{G \in \G^\star} \| \vw_G \|_2 
~\ \leq (1 + \sqrt{B\alpha}) \sqrt{|\G^\star|} \| \vx \|_2
\end{align*} \end{proof}
We see that $(1 + \sqrt{B\alpha}) \sqrt{|\G^\star|}$ (Lemma \ref{lem:compatibility}) gives an upper bound on the subspace compatibility constant with respect to the $\ell_2$ norm for the subspace indexed by the support of the vector, which is contained in the span of the union of groups in $\G^\star$.

\begin{definition}
\label{def:rsc}
For a given set $S$, and given vector $\vx^\star$, the loss function $\mathcal{L}_{\mPhi}(\vx)$ satisfies the Restricted Strong Convexity(RSC) condition with parameter $\kappa$ and tolerance $\tau$ if
\[
\mathcal{L}_{ \mPhi}(\vx^\star + \Delta) -  \mathcal{L}_{ \mPhi}(\vx^\star) - \langle \nabla \mathcal{L}_{ \mPhi}(\vx^\star), \Delta \rangle \geq \kappa \| \Delta \|_2^2 - \tau^2(\vx^\star) ~\ \forall \Delta \in S 
\]
\end{definition}

In this paper, we consider vectors $\vx^\star$ that lie \emph{exactly} in $k \ll M$ groups, and display within-group sparsity. This implies that the tolerance $\tau(\vx^\star) = 0$, and we will ignore this term henceforth. 

We also define the following set, which will be used in the sequel:
\begin{equation}
\label{eq:setc}
C(sA, sB, \vx^\star) := \{ \Delta \in \R^p | h(\Pi_{sB^\perp}\Delta) \leq 3h(\Pi_{sB}\Delta) + 4h(\Pi_{sA^\perp} \vx^\star) \}
\end{equation}
where $\Pi_{sA}(\cdot)$ denotes the projection onto the subspace $sA$. Based on the results above, we can now apply a result from \cite{Mest} to the SOSlasso:
\begin{theorem}(Corollary 1 in \cite{Mest})
\label{th:mest}
Consider a convex and differentiable loss function such that RSC holds with constants $\kappa$ and $\tau =0$ over (\ref{eq:setc}), and a norm $h(\cdot)$ decomposable over sets $sA$ and $sB$. For the optimization program in (\ref{eq:opt}), using the parameter $\lambda_n \geq 2h^*(\nabla\mathcal{L}_{\mPhi}(\vx^\star))$, any optimal solution $\hvx_{\lambda_n}$ to (\ref{eq:opt}) satisfies
\[
\|\widehat{\vx}_{\lambda_n} - \vx^\star  \|_2^2 \leq \frac{9 \lambda_n^2}{\kappa} \Psi^2(sB)
\]
\end{theorem}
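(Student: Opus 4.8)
The plan is to obtain the bound as a direct instance of Corollary~1 of \cite{Mest}, by checking that every structural hypothesis of that result has already been established for the SOSlasso. The first point to settle is that, although (\ref{eq:opt}) is written as a sum of per-task losses, the reformulation (\ref{eq:opt2}) exhibits it as a \emph{single} $M$-estimation problem over the stacked variable $\vx \in \R^{\T p}$, with block-diagonal design $\mPhi$ and penalty $h(\cdot)$ built from the aggregated group collection $\G$; hence the abstract machinery of \cite{Mest}, which is stated for one estimator and one decomposable regularizer, applies without modification once the pieces below are in place.

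Next I would assemble those pieces. Lemma~\ref{lem:norm} gives that $h(\cdot)$ is a norm, so it possesses a well-defined dual $h^*(\cdot)$, as required. Lemma~\ref{lem:decomposable} gives decomposability of $h$ with respect to the subspace pair $sA = sB = \mathrm{span}(S)$, where $S = supp(\vx^\star)$; since $\vx^\star$ lies exactly in $k$ groups with within-group sparsity we have $\vx^\star \in sA$, so $\Pi_{sA^\perp}\vx^\star = 0$ and the set (\ref{eq:setc}) reduces to the cone $\{\Delta : h(\Pi_{sB^\perp}\Delta) \le 3\, h(\Pi_{sB}\Delta)\}$. Restricted strong convexity of $\mathcal{L}_{\mPhi}$ over this cone, with curvature $\kappa$ and tolerance $\tau = 0$, is exactly the hypothesis of the theorem (Definition~\ref{def:rsc}). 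Finally, the stipulated choice $\lambda_n \ge 2\, h^*(\nabla \mathcal{L}_{\mPhi}(\vx^\star))$ is precisely the regularization condition under which \cite{Mest} shows the error $\hvx_{\lambda_n} - \vx^\star$ lies in (\ref{eq:setc}); together with decomposability and RSC this drives their deterministic error bound, and substituting the subspace compatibility constant $\Psi(sB)$ from Definition~\ref{def:compatibility} yields the stated inequality. In the present setting one may further upper bound $\Psi(sB)$ via Lemma~\ref{lem:compatibility}, though the theorem itself is phrased in terms of $\Psi(sB)$.

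The only genuine care needed --- and where I would be most cautious --- is in the bookkeeping of the reduction: checking that the block-diagonal reformulation really is the same optimization problem, that taking $sA = sB$ is an admissible choice of ``model subspace / its enlargement'' in the sense of \cite{Mest} (it is, since the two coincide), and that the specialization $\tau = 0$ is legitimate for the class of exactly group-sparse $\vx^\star$ considered here. A subtlety worth flagging is that $h$ is defined through an infimum over latent decompositions, so a priori neither the norm property nor decomposability is obvious; but those are exactly what Lemmas~\ref{lem:norm} and~\ref{lem:decomposable} guarantee, after which the cited corollary applies verbatim. Consequently the proof here is essentially an assembly argument rather than a new calculation.
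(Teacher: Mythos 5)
Your proposal is correct and matches the paper's approach: the paper states this result as a direct citation of Corollary 1 in \cite{Mest}, relying—exactly as you do—on Lemma \ref{lem:norm} (norm property), Lemma \ref{lem:decomposable} (decomposability with $sA=sB=\mathrm{span}(S)$), the RSC hypothesis with $\tau=0$, and the choice $\lambda_n \geq 2h^*(\nabla\mathcal{L}_{\mPhi}(\vx^\star))$ to make the cited bound apply verbatim. Your additional bookkeeping about the block-diagonal reformulation and the vanishing of $h(\Pi_{sA^\perp}\vx^\star)$ is consistent with the paper's remarks and introduces nothing that deviates from its argument.
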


The result above shows a general bound on the error using the lasso with sparse overlapping sets. Note that the regularization parameter $\lambda_n$ as well as the RSC constant $\kappa$ depend on the loss function $\mathcal{L}_{\mPhi}(\vx)$. Convergence for logistic regression settings may be derived using methods in \cite{bachlogitRSC}. In the next section, we consider the least squares loss (\ref{eq:opt2}), and show that the estimate using the SOSlasso is consistent. 


\section{Consistency of SOSlasso with Squared Error Loss}
\label{sec:consis}

We first need to bound the dual norm of the gradient of the loss function, so as to bound $\lambda_n$. Consider 
$\mathcal{L} := \mathcal{L}_{\mPhi}(\vx) = \frac{1}{2n} \| \vy - \mPhi \vx \|^2 $.
The gradient of the loss function with respect to $\vx$ is given by $\nabla \mathcal{L} = \frac{1}{n} \mPhi^T (\mPhi \vx - \vy) = \frac{1}{n} \mPhi^T\eta$
where $\eta = [\eta_1^T \eta_2^T \ldots \eta_\T^T]^T$ (see Section \ref{sec:notations}).
Our goal now is to find an upper bound on the quantity $h^*(\nabla\mathcal{L})$, which from (\ref{eq:dualineq}) is
\[
\frac12 \max_{G \in \G} \| \nabla \mathcal{L}_G \|_2 =  \frac{1}{2n} \max_{G \in \G} \| \mPhi_{G}^T\eta \|_2
\]
where $\mPhi_{G}$ is the matrix $\mPhi$ restricted to the columns indexed by the group $G$. We will prove an upper bound for the above quantity in the course of the  results that follow. 

Since $\eta \sim \N(0, \sigma^2 \mtx{I})$, we have $\mPhi^T_{G} \eta \sim \sigma \N(0,\mPhi_G^T \mPhi_G)$. Defining $\sigma_{mG} :=  \sigma_{\max} \{\mPhi_{G}^T \mPhi_G \}$ to be the maximum singular value, we have $\| \mPhi^T_G \eta \|_2^2 \leq \sigma^2 \sigma^2_{mG} \| \gamma \|_2^2$, where $\gamma \sim \N(0,\mtx{I}_{|G|}) \Rightarrow \| \gamma \|_2^2 \sim \chi^2_{|G|}$, where $\chi^2_{d}$ is a  chi-squared random variable with $d$ degrees of freedom. This allows us to work with the more tractable chi squared random variable when we look to bound the dual norm of $\nabla \mathcal{L}$. The next lemma helps us obtain a bound on the maximum of $\chi^2$ random variables. 
\begin{lemma}
\label{lem:chisqmax}
Let $\vz_1, \vz_2 , \ldots, \vz_M$ be chi-squared random variables with $d$ degrees of freedom. Then for some constant $c$,  \[\Pr \left( \max_{i = 1,2,\ldots,M} z_i \leq c^2 d \right) \geq 1 - \exp \left( \log(M) - \frac{(c - 1)^2d}{2} \right)
 \]
\end{lemma}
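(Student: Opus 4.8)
The plan is to combine a standard Gaussian/chi-squared tail bound with a union bound over the $M$ variables. First I would recall the well-known one-sided concentration inequality for a chi-squared random variable with $d$ degrees of freedom: for any $t \geq 0$, $\Pr\left( z \geq d + 2\sqrt{dt} + 2t \right) \leq \exp(-t)$, which follows from Laurent--Massart type bounds (or, more crudely, from a Chernoff bound on the moment generating function $\E[e^{sz}] = (1-2s)^{-d/2}$ for $s < 1/2$). Evaluated at the right scaling this yields, for any $c \geq 1$, a bound of the form $\Pr\left( z \geq c^2 d \right) \leq \exp\left( -\frac{(c-1)^2 d}{2} \right)$; the point is that the deviation $c^2 d - d = (c^2-1)d$ dominates the linear-in-$d$ correction terms so that the exponent is at least $(c-1)^2 d / 2$ up to constants, which is exactly the form claimed with the unspecified constant $c$ absorbing any slack.

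Next I would apply the union bound:
\[
\Pr\left( \max_{i=1,\ldots,M} z_i \geq c^2 d \right) \leq \sum_{i=1}^M \Pr\left( z_i \geq c^2 d \right) \leq M \exp\left( -\frac{(c-1)^2 d}{2} \right) = \exp\left( \log M - \frac{(c-1)^2 d}{2} \right).
\]
Taking complements gives the stated lower bound on $\Pr\left( \max_i z_i \leq c^2 d \right)$. Note that the lemma does not require the $z_i$ to be independent, since the union bound is insensitive to dependence; this matters because the relevant chi-squared variables in the application arise from overlapping groups $G$ and are correlated.

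The only mildly delicate step is getting the constant in the exponent to be exactly $(c-1)^2/2$ rather than something like $(c-1)^2/8$ or $(\sqrt{c^2}-1)^2/2$ with different constants. I would handle this by being slightly loose: the precise Laurent--Massart bound $\Pr(z \geq d + 2\sqrt{dt} + 2t) \leq e^{-t}$ gives, on setting $t$ so that $d + 2\sqrt{dt}+2t = c^2 d$, a value $t = \Theta((c-1)^2 d)$; alternatively one can simply invoke that there exists a constant $c$ (the one in the statement) for which the clean inequality $\Pr(z \geq c^2 d) \leq \exp(-(c-1)^2 d/2)$ holds, which is the interpretation the lemma intends. So the proof is essentially: (1) cite/derive the chi-squared upper tail bound, (2) rewrite it in the $c^2 d$ parametrization, (3) union bound over $M$ terms, (4) take complements. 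The main obstacle, such as it is, is purely bookkeeping of constants in step (2); there is no conceptual difficulty, and no independence assumption is needed.
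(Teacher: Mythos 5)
Your proof is correct and takes essentially the same route as the paper, which simply cites a chi-squared tail bound of the form $\Pr(z_i \geq c^2 d) \leq \exp\left(-\tfrac{(c-1)^2 d}{2}\right)$ from its reference and then applies the union bound and takes complements exactly as you do. Your worry about the constant in the exponent is unnecessary: the standard Chernoff bound $\Pr(z \geq a d) \leq \left(a e^{1-a}\right)^{d/2}$ for $a>1$, evaluated at $a = c^2$, gives exponent $\tfrac{d}{2}\left(2\ln c + 1 - c^2\right) \leq -\tfrac{d}{2}(c-1)^2$ because $\ln c \leq c-1$, so the clean form holds exactly.
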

\begin{proof}
From the chi-squared tail bound in \cite{dasgupta},  $ \Pr(z_i \geq c^2 d) \leq  \exp \left( - \frac{(c - 1)^2d}{2} \right) $.
The result follows from a union bound and inverting the expression. 
\end{proof}

\begin{lemma}
\label{th:dualboundl2}
Consider the loss function $\mathcal{L} := \frac{1}{2n} \sum_{t = 1}^\T \| \vy_t - \mPhi_t \vx_t \|^2 = \frac{1}{2n} \|\vy - \mPhi \vx\|^2$, with the $\mPhi_t's$ deterministic and the measurements corrupted with AWGN of variance $\sigma^2$. For the regularizer in (\ref{eq:reg}),
the dual norm of the gradient of the loss function is bounded as 
\[
h^*(\nabla \mathcal{L})^2  \leq \frac{\sigma^2 \sigma^2_m}{4} \frac{(\log(M) + \T B)}{n}
\]
with probability at least $1 - c_1 \exp(-c_2n)$, for $c_1, c_2 > 0$, and where $\sigma_m = \max_{G \in \G} \sigma_{mG}$
\end{lemma}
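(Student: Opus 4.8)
The plan is to chain the three ingredients already in place: the dual-norm inequality (\ref{eq:dualineq}), the Gaussianity of the noise, and the maximal $\chi^2$ bound of Lemma~\ref{lem:chisqmax}. For the least-squares loss $\nabla\mathcal{L}(\vx^\star)=\frac1n\mPhi^T\eta$, so (\ref{eq:dualineq}) gives $h^*(\nabla\mathcal{L})^2\le\frac{1}{4n^2}\max_{G\in\G}\|\mPhi_G^T\eta\|_2^2$. At this point I would record the one combinatorial fact responsible for the $\T B$ in the statement: an aggregated group $G\in\G$ is formed by stacking an original group $\tilde G\subset\{1,\dots,p\}$ with $|\tilde G|\le B$ across all $\T$ tasks, so $|G|\le\T B$ for every $G$. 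Next I would pass to chi-squared variables: since $\eta\sim\N(0,\sigma^2\mtx{I})$ we have $\mPhi_G^T\eta\sim\N(0,\sigma^2\mPhi_G^T\mPhi_G)$, and diagonalizing the Gram matrix gives $\|\mPhi_G^T\eta\|_2^2\le\sigma^2\sigma_{mG}^2\,\|\gamma_G\|_2^2$ with $\gamma_G\sim\N(0,\mtx{I}_{|G|})$, hence $\|\gamma_G\|_2^2\sim\chi^2_{|G|}$ (reading $\sigma_{mG}$ as the largest singular value of the normalized Gram $n^{-1}\mPhi_G^T\mPhi_G$ is what converts the $1/n^2$ prefactor into the $1/n$ of the statement). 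Since $|G|\le\T B$, a monotone coupling gives $\chi^2_{|G|}\preceq\chi^2_{\T B}$ for each $G$, so it suffices to control the maximum of $M$ (dependent) copies of $\chi^2_{\T B}$.

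I would then apply Lemma~\ref{lem:chisqmax} with $d=\T B$ to get $\max_{G}\chi^2_{|G|}\le c^2\T B$ with probability at least $1-\exp(\log M-\frac{(c-1)^2\T B}{2})$, choose $c$ so that $c^2\T B\lesssim\T B+\log M$ (i.e.\ $c^2=1+\log M/(\T B)$) to produce the numerator $\log M+\T B$, and substitute back to obtain
\[
h^*(\nabla\mathcal{L})^2\;\le\;\frac{\sigma^2\sigma_m^2}{4}\cdot\frac{\log M+\T B}{n},\qquad\sigma_m:=\max_{G}\sigma_{mG},
\]
on this event. It then remains to check that under the scaling of interest the exponent $\frac{(c-1)^2\T B}{2}-\log M$ is at least a constant multiple of $n$, giving the claimed $1-c_1\exp(-c_2 n)$; equivalently one can replace Lemma~\ref{lem:chisqmax} by the Laurent--Massart tail $\Pr(\chi^2_d\ge d+2\sqrt{dt}+2t)\le e^{-t}$, union-bound over the $M$ groups with $t=\log M+c_2 n$, and absorb the cross term by AM--GM, which makes the probability statement transparent at the cost of slightly worse absolute constants.

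I expect this last calibration to be the main obstacle: Lemma~\ref{lem:chisqmax} ties the magnitude of the bound to the confidence through the single knob $c$, so obtaining simultaneously the clean additive form $\log M+\T B$ in the numerator \emph{and} a failure probability exponentially small in $n$ forces a relationship between $n$, $\T B$ and $\log M$ — essentially the high-dimensional regime $n\lesssim\T B+\log M$, which is the regime the consistency results target anyway. The remaining steps are routine: the reduction via (\ref{eq:dualineq}) is algebraic, the passage to $\chi^2$ is diagonalization plus Gaussian rotation invariance, and $|G|\le\T B$ is immediate from the construction of $\G$ from $\tilde\G$ in Section~\ref{sec:notations}.
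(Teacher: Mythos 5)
Your proposal is correct and follows the same skeleton as the paper's proof: the reduction via (\ref{eq:dualineq}), the Gaussian-to-chi-squared passage using $|G|\le \T B$, and Lemma \ref{lem:chisqmax} plus a union bound. The one point of genuine divergence is exactly the calibration issue you flag as the main obstacle. You take $c$ to be a constant ($c^2=1+\log M/(\T B)$, or the Laurent--Massart variant), so the chi-squared maximum is controlled at its natural scale $\T B+\log M$; you then correctly observe that a failure probability of order $e^{-c_2 n}$ does not follow for free and requires either the regime $n\lesssim \T B+\log M$ or an extra additive $n$ in the numerator, and you recover the $1/n$ in the display by folding a factor of $n$ into a normalized Gram matrix. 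The paper resolves the same tension differently: it keeps $\sigma_{mG}=\sigma_{\max}\{\mPhi_G^T\mPhi_G\}$ unnormalized and applies Lemma \ref{lem:chisqmax} with the effective constant $cn$ where $c^2=\frac{\log M+\T B}{\T B\, n}$, i.e., it bounds the maximal chi-squared variable by $(\log M+\T B)\,n$ rather than by $\log M+\T B$. At that inflated level the exponent $\frac{(cn-1)^2\T B}{2}-\log M$ is automatically of order $(\log M+\T B)n$, so the $1-c_1e^{-c_2 n}$ probability comes with no assumption relating $n$, $\T B$ and $\log M$; the price is that the resulting bound is a factor of $n$ looser than a tight chi-squared analysis would give under the paper's literal definition of $\sigma_{mG}$, and the two readings of the final display coincide only under your normalized-design convention. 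Both routes yield the identical stated inequality; yours is the sharper and more standard calibration but genuinely needs the regime restriction you identify, while the paper's choice of $c$ dissolves the probability issue by giving away a factor of $n$. This is not a gap in your argument, but you should state explicitly which normalization of $\sigma_{mG}$ you adopt, since the $1/n$ versus $1/n^2$ scaling hinges entirely on that choice.
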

\begin{proof}
Let $\gamma \sim \chi^2_{\T |G|}$. We begin with the upper bound obtained for the dual norm of the regularizer in (\ref{eq:dualineq}):
\begin{align*}
h^*(\nabla \mathcal{L})^2 
&\stackrel{(i)}{\leq} \frac{1}{4} \max_{G \in G} \left\| \frac{1}{n} \mPhi^T_G \eta \right\|_2^2 
\stackrel{}{\leq} \frac{\sigma^2}{4} \max_{G \in \G} \frac{\sigma^2_{mG} \gamma}{n^2} \\
&\stackrel{(ii)}{\leq} \frac{\sigma^2 \sigma^2_m}{4} \max_{G \in \G} \frac{\gamma}{n^2} 
\stackrel{(iii)}{\leq} \frac{\sigma^2 \sigma^2_m}{4} c^2 \T B ~\  \textbf{ w. p. } 1 - \exp \left( \log(M) - \frac{(cn - 1)^2\T B}{2} \right)
\end{align*}
where $(i)$ follows from the formulation of the gradient of the loss function and the fact that the square of maximum of non negative numbers is the maximum of the squares of the same numbers. In $(ii)$, we have defined $\sigma_m = \max_G \sigma_{mG}$. Finally, we have made use of Lemma \ref{lem:chisqmax} in $(iii)$. We then set
\[
c^2 = \frac{\log(M) + \T B}{\T Bn}
\] to obtain the result. 
\end{proof}

We combine the results developed so far to derive the following consistency result for the SOS lasso, with the least squares loss function. \begin{theorem}
\label{th:consistencyl2}
Suppose we obtain linear measurements of a sparse overlapping grouped matrix $\mX^\star \in \R^{p \times \T}$, corrupted by AWGN of variance $\sigma^2$. Suppose the matrix $\mX^\star$ can be decomposed into $M$ possible overlapping groups of maximum size $B$, out of which $k$ are active. Furthermore, assume that a fraction $\alpha \in (0,1]$ of the coefficients are non zero in each active group. Consider the following vectorized SOSlasso multitask regression problem (\ref{eq:opt2}):
\[
\widehat{\vx} = \arg \min_{\vx} \left\{ \frac{1}{2n}  \| \vy - \mPhi \vx \|_2^2 ~\ + ~\ \lambda_n h(\vx) \right\},
\]
\[ h(\vx) = \inf_{\W} \sum_{G \in \G} \left( \| \vw_G \|_2 + \| \vw_G \|_1\right) ~\ \textbf{ s.t. } \sum_{G \in \G} \vw_G = \vx \]
Suppose the data matrices $\mPhi_t$ are non random, and the loss function satisfies restricted strong convexity assumptions with parameter $\kappa$. Then, for $\lambda_n^2 \geq \frac{\sigma^2 \sigma^2_m (\log(M) + \T B)}{4n}$, the following holds with probability at least $1 - c_1 \exp(-c_2 n)$, with $c_1, c_2 > 0$:
\[
\| \widehat{\vx} - \vx^\star \|_2^2 \leq \frac{9}{4} \frac{ \sigma^2 \sigma^2_m \left( 1+\sqrt{\T B \alpha}\right)^2 k (\log(M)+\T B)}{n \kappa}
\]
where we define $\sigma_m := \max_{G \in \G} \sigma_{max}\{\mPhi^T_{G} \mPhi_{G}\}$
\end{theorem}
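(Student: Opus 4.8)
The plan is to obtain the bound purely by assembling Theorem~\ref{th:mest} with the three structural facts already in hand: the decomposability of $h$ (Lemma~\ref{lem:decomposable}), the subspace-compatibility estimate (Lemma~\ref{lem:compatibility}), and the dual-norm tail bound (Lemma~\ref{th:dualboundl2}). Everything is carried out in the vectorized space $\R^{\T p}$, with the block-diagonal design $\mPhi$ and the aggregated group collection $\G$, so the only genuinely new ingredient is dimensional bookkeeping.

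First I would fix the subspace pair. Let $S = supp(\vx^\star)$ and take $sA = sB$ to be the span of the coordinates in $S$; Lemma~\ref{lem:decomposable} then gives that $h$ is decomposable with respect to $(sA,sB)$. Since $\vx^\star \in sA$ we have $\Pi_{sA^\perp}\vx^\star = \vct{0}$, so the cone $C(sA,sB,\vx^\star)$ in (\ref{eq:setc}) collapses to $\{\Delta : h(\Pi_{sB^\perp}\Delta) \leq 3 h(\Pi_{sB}\Delta)\}$, and the tolerance is $\tau(\vx^\star)=0$ because $\vx^\star$ lies in exactly $k$ groups with within-group sparsity. By hypothesis the squared-error loss $\mathcal{L} = \frac{1}{2n}\|\vy-\mPhi\vx\|_2^2$ satisfies RSC with parameter $\kappa$ (and $\tau = 0$) over this cone, so both hypotheses of Theorem~\ref{th:mest} — decomposability and RSC over (\ref{eq:setc}) — are in place.

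Next I would verify the choice of $\lambda_n$ and control the compatibility constant. Theorem~\ref{th:mest} requires $\lambda_n \geq 2 h^*(\nabla\mathcal{L}(\vx^\star))$. Lemma~\ref{th:dualboundl2} supplies $h^*(\nabla\mathcal{L})^2 \leq \frac{\sigma^2\sigma_m^2(\log M + \T B)}{4n}$ on an event of probability at least $1 - c_1\exp(-c_2 n)$; the chi-squared degrees of freedom are $\T|G| \leq \T B$ precisely because each $G \in \G$ is the aggregate, across all $\T$ tasks, of one original group $\tilde G_i$ of size at most $B$. On that event the stated choice $\lambda_n^2 \geq \frac{\sigma^2\sigma_m^2(\log M + \T B)}{4n}$ makes $\lambda_n$ an admissible regularization parameter (up to an absolute constant, which is where one reconciles the $2h^*$ threshold of Theorem~\ref{th:mest} with the $\tfrac14$ appearing in Lemma~\ref{th:dualboundl2}; this affects only the numerical prefactor, not the rate). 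For the compatibility constant, apply Lemma~\ref{lem:compatibility} in $\R^{\T p}$: $\vx^\star$ is supported on $|\G^\star| = k$ active groups, each of size at most $\T B$, with at most a fraction $\alpha$ of nonzeros per active group, whence $\Psi^2(sB) \leq (1+\sqrt{\T B\alpha})^2 k$.

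Finally I would substitute into Theorem~\ref{th:mest}, which gives $\|\hvx - \vx^\star\|_2^2 \leq \frac{9\lambda_n^2}{\kappa}\Psi^2(sB)$; plugging $\lambda_n^2 = \frac{\sigma^2\sigma_m^2(\log M + \T B)}{4n}$ together with the bound on $\Psi^2(sB)$ yields exactly $\frac{9}{4}\frac{\sigma^2\sigma_m^2(1+\sqrt{\T B\alpha})^2 k(\log M + \T B)}{n\kappa}$, on the stated high-probability event. The main obstacle in a fully self-contained treatment would be establishing the RSC condition for the block-diagonal design over the cone $C$ — the standard route being a Gaussian-width bound plus a peeling argument restricted to that cone — but since RSC is assumed in the statement, the remaining difficulty is confined to the dimensional bookkeeping (group sizes $\T B$, $k$ active groups) and to tracking the absolute constants between the dual-norm bound and the threshold demanded by Theorem~\ref{th:mest}.
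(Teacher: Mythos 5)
Your proposal is correct and follows exactly the route of the paper's own (one-line) proof: substitute the compatibility bound of Lemma~\ref{lem:compatibility} (with group size $\T B$ and $|\G^\star|=k$) and the dual-norm bound of Lemma~\ref{th:dualboundl2} into Theorem~\ref{th:mest}, with decomposability supplied by Lemma~\ref{lem:decomposable}. Your remark about reconciling the $2h^*$ threshold in Theorem~\ref{th:mest} with the stated choice of $\lambda_n$ is a fair observation about a factor-of-two in the constants that the paper itself glosses over; it affects only the prefactor, as you note.
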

\begin{proof}
Follows from substituting in Theorem \ref{th:mest} the results from Lemma \ref{lem:compatibility} and Lemma \ref{th:dualboundl2}.  
\end{proof}

From \cite{Mest}, we see that the convergence rate matches that of the group lasso, with an additional multiplicative factor $\alpha$. This stems from the fact that the signal has a sparse structure ``embedded" within a group sparse structure. Visualizing the optimization problem as that of solving a lasso within a group lasso framework lends some intuition into this result. Note that since $\alpha < 1$, this bound is much smaller than that of the standard group lasso.

\section{Experiments and Results}
\label{sec:expts}

\vspace{-2mm}
\subsection{Synthetic data, Gaussian Linear Regression}
For $\T=20$ tasks, we define a $N=2002$ element vector divided into $M=500$ groups of size $B=6$. Each group overlaps with its neighboring groups ($G_1 = \{1,2, \ldots, 6 \}$, $G_2 = \{ 5,6,\ldots,10 \}$, $G_3 = \{9,10,\ldots,14  \}$, $\dots$). 20 of these groups were activated uniformly at random, and populated from a uniform $[-1,1]$ distribution. A proportion $\alpha$ of these coefficients with largest magnitude were retained as true signal. For each task, we obtain 250 linear measurements using a $\N(0,\frac{1}{250} \mtx{I})$ matrix. We then corrupt each measurement with Additive White Gaussian Noise (AWGN), and assess signal recovery in terms of Mean Squared Error (MSE).  The regularization parameter was clairvoyantly picked to minimize the MSE over a range of parameter values. The results of applying lasso, standard latent group lasso \cite{jacob, latent}, and our SOSlasso to these data are plotted in Figures \ref{varynoisegauss}, varying $\sigma$, $\alpha=0.2$, and \ref{varyalphagauss}, varying $\alpha$, $\sigma=0.1$. Each point in Figures \ref{varynoisegauss} and \ref{varyalphagauss}, is the average of 100 trials, where each trial is based on a new random instance of $\mX^\star$ and the Gaussian data matrices. 

\begin{figure}[!h]
\centering
\subfigure[Varying $\sigma$]{
\includegraphics[width = 40mm, height = 32mm]{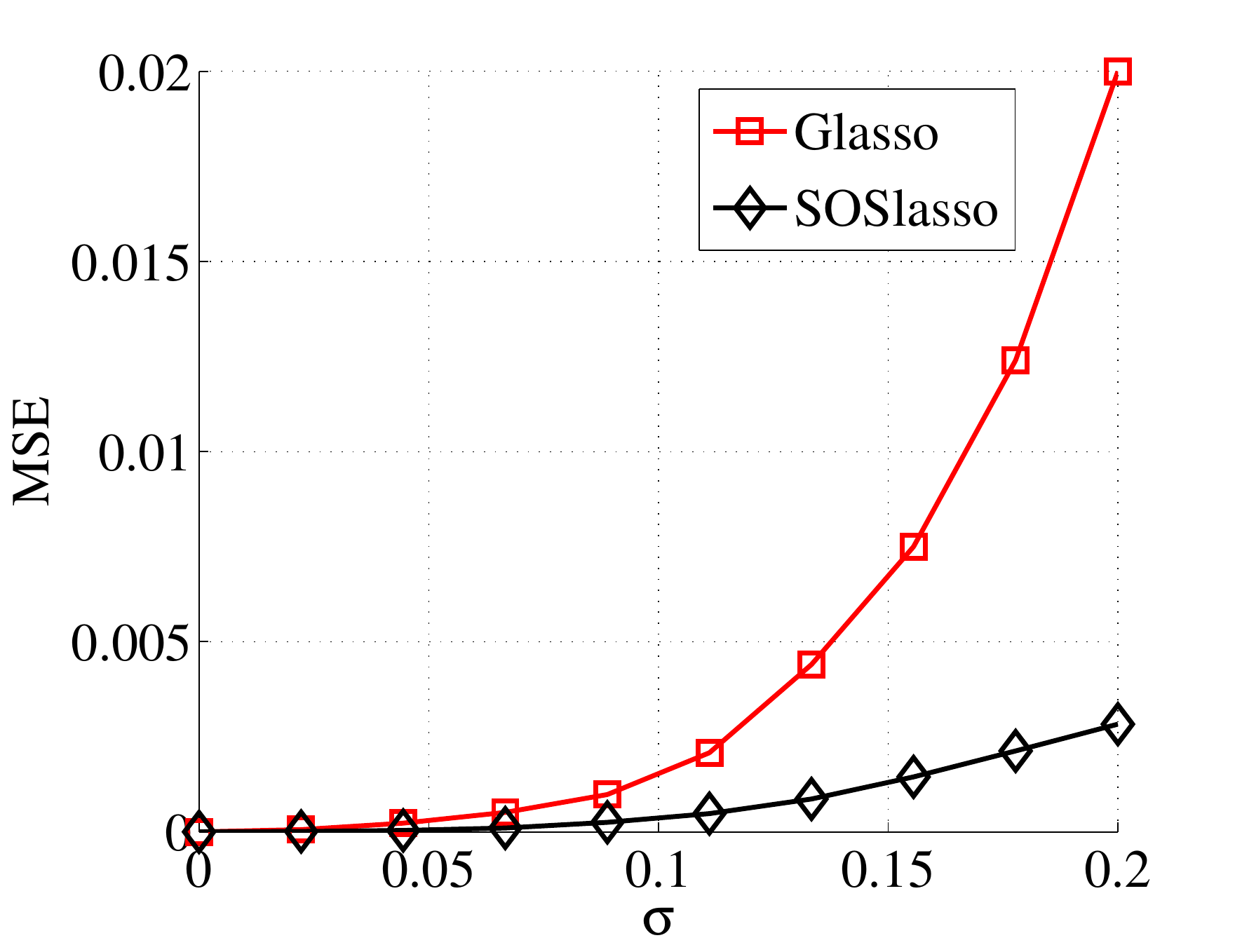}
\label{varynoisegauss}}
\subfigure[Varying $\alpha$]{
\includegraphics[width = 40mm, height = 32mm]{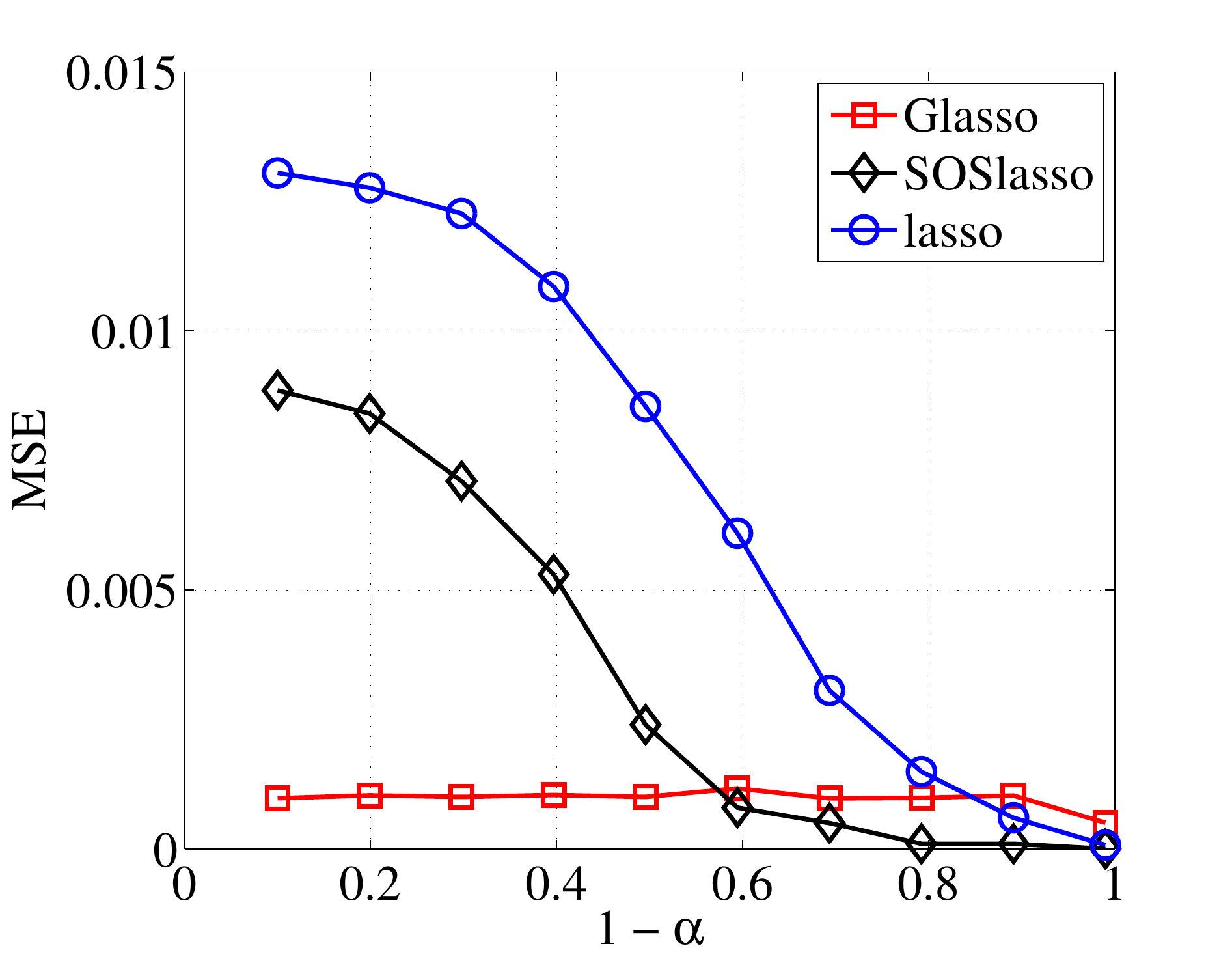}
\label{varyalphagauss}}
\subfigure[Sample pattern]{
\includegraphics[width =27mm, height = 32mm]{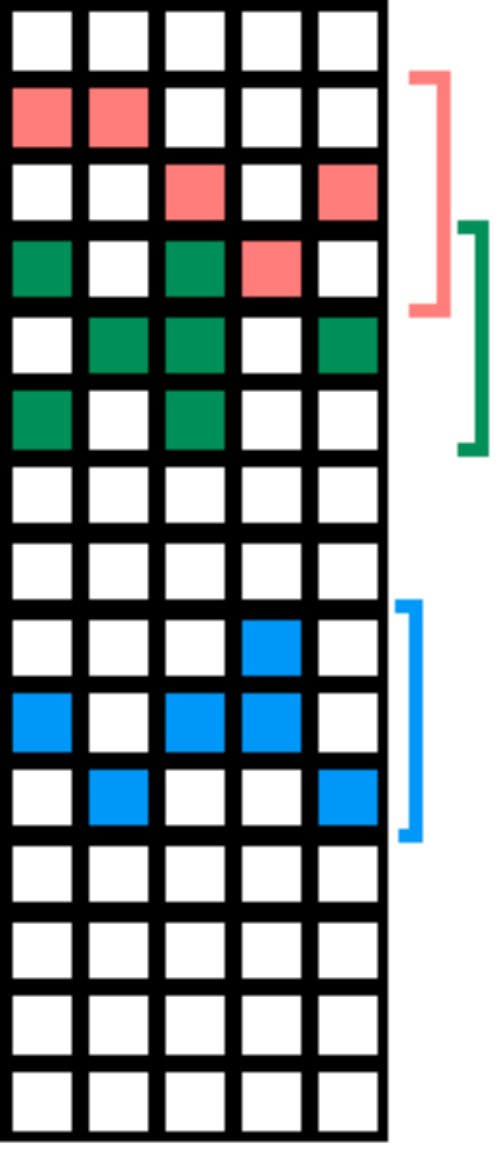}
\label{toy}}
\caption{As the noise is increased  \subref{varynoisegauss}, our proposed penalty function (SOSlasso) allows us to recover the true coefficients more accurately than the group lasso (Glasso). Also, when alpha is large, the active groups are not sparse, and the standard overlapping group lasso outperforms the other methods. However, as $\alpha$ reduces, the method we propose outperforms the group lasso \subref{varyalphagauss}.  \subref{toy} shows a toy sparsity pattern, with different colors denoting different overlapping groups}
\label{fig:toy}
\end{figure}

\vspace{-3mm}
\subsection{The SOSlasso for fMRI}
In this experiment, we compared SOSlasso, lasso, and Glasso in analysis of the star-plus dataset \cite{mitchellfmri}. 6 subjects made judgements that involved processing 40 sentences and 40 pictures while their brains were scanned in half second intervals using fMRI\footnote{Data and documentation available at http://www.cs.cmu.edu/afs/cs.cmu.edu/project/theo-81/www/}. We retained the 16 time points following each stimulus, yielding 1280 measurements at each voxel. The task is to distinguish, at each point in time, which stimulus a subject was processing.  \cite{mitchellfmri} showed that there exists cross-subject consistency in the cortical regions useful for prediction in this task. Specifically, experts partitioned each dataset into 24 non overlapping regions of interest (ROIs), then reduced the data by discarding all but 7 ROIs and, for each subject, averaging the BOLD response across voxels within each ROI and showed that a classifier trained on data from 5 subjects generalized when applied to data from a 6th.

We assessed whether SOSlasso could leverage this cross-individual consistency to aid in the discovery of predictive voxels without requiring expert pre-selection of ROIs, or data reduction, or any alignment of voxels beyond that existing in the raw data. Note that, unlike \cite{mitchellfmri}, we do not aim to learn a solution that generalizes to a withheld subject. Rather, we aim to discover a group sparsity pattern that suggests a similar set of voxels in all subjects, before optimizing a separate solution for each individual. If SOSlasso can exploit cross-individual anatomical similarity from this raw, coarsely-aligned data, it should show reduced cross-validation error relative to the lasso applied separately to each individual. If the solution is sparse within groups and highly variable across individuals, SOSlasso should show reduced cross-validation error relative to Glasso. Finally, if SOSlasso is finding useful cross-individual structure, the features it selects should align at least somewhat with the expert-identified ROIs shown by \cite{mitchellfmri} to carry consistent information.

We trained 3 classifiers using 4-fold cross validation to select the regularization parameter, considering all available voxels without preselection.  We group regions of $5\times5\times1$ voxels and considered overlapping groups ``shifted" by 2 voxels in the first 2 dimensions.\footnote{The irregular group size compensates for voxels being larger and scanner coverage being smaller in the z-dimension (only 8 slices relative to 64 in the x- and y-dimensions).} Figure \ref{errperperson} shows the individual error rates across the 6 subjects for the three methods. Across subjects, SOSlasso had a significantly lower cross-validation error rate (27.47 \%) than individual lasso (33.3 \%; within-subjects t(5) = 4.8; p = 0.004 two-tailed), showing that the method can exploit anatomical similarity across subjects to learn a better classifier for each. SOSlasso also showed significantly lower error rates than glasso (31.1 \%; t(5) = 2.92; p $=$ 0.03 two-tailed), suggesting that the signal is sparse within selected regions and variable across subjects. 

Figure \ref{fmrifig} presents a sample of the the sparsity patterns obtained from the different methods, aggregated over all subjects. Red points indicate voxels that contributed positively to picture classification in at least one subject, but never to sentences; Blue points have the opposite interpretation. Purple points indicate voxels that contributed positively to picture and sentence classification in different subjects. The remaining slices for the SOSlasso are shown in Figure \ref{all}. 
\begin{figure}[!t]
  \begin{minipage}[c]{0.59\textwidth}
\begin{tabular}{c c}
	\multirow{2}{*}[30mm]{\subfigure[]{ \includegraphics[height = 60mm]{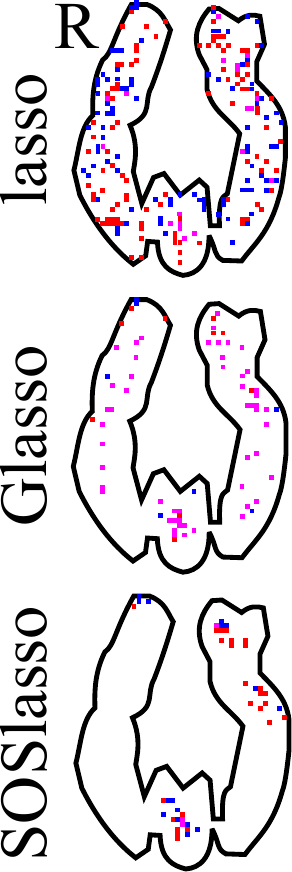}\label{fmrifig}}} &
	\subfigure[]{\includegraphics[ width = 45mm, height = 30mm]{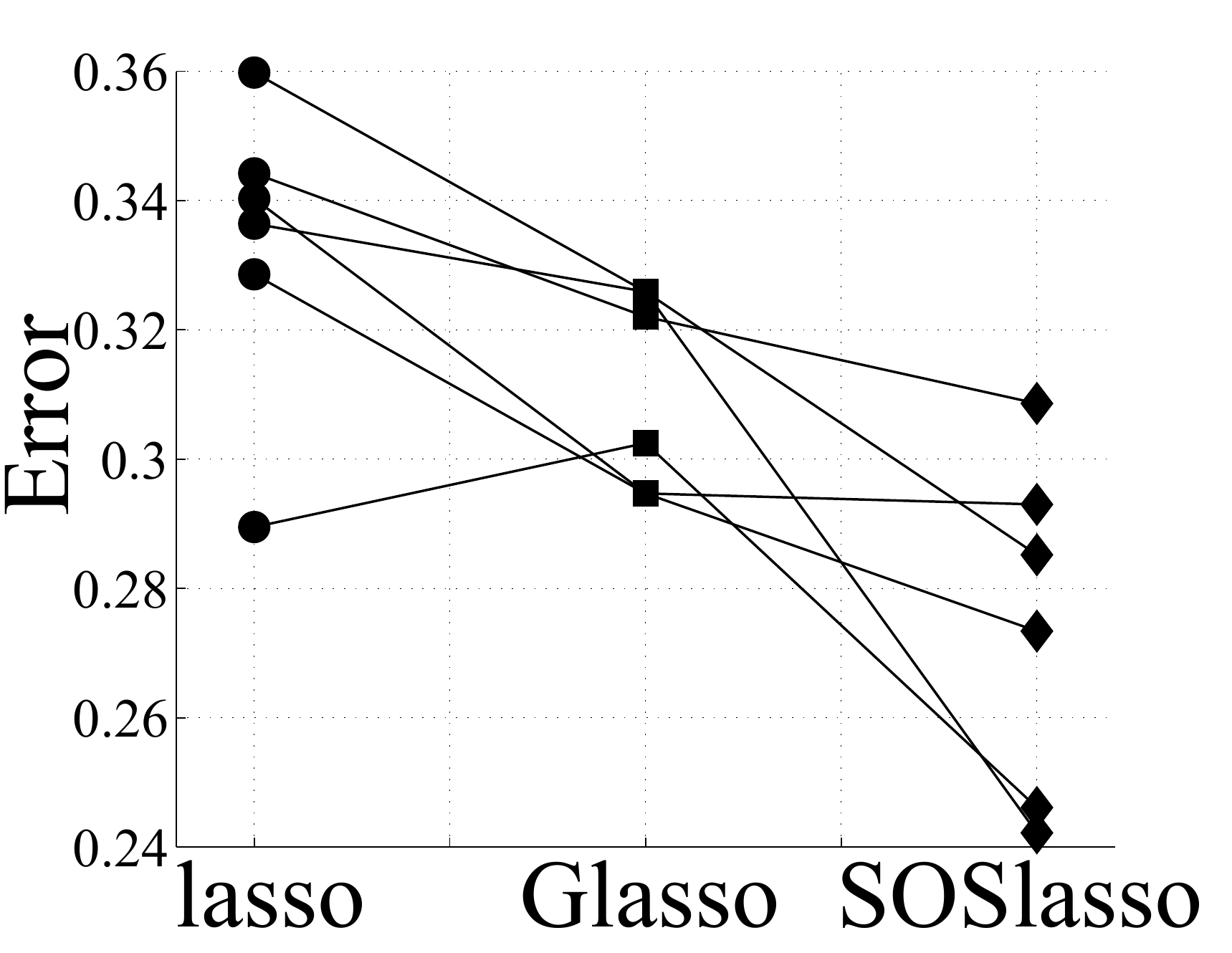} \label{errperperson}} \\
&	\subfigure[]{\includegraphics[height = 28mm]{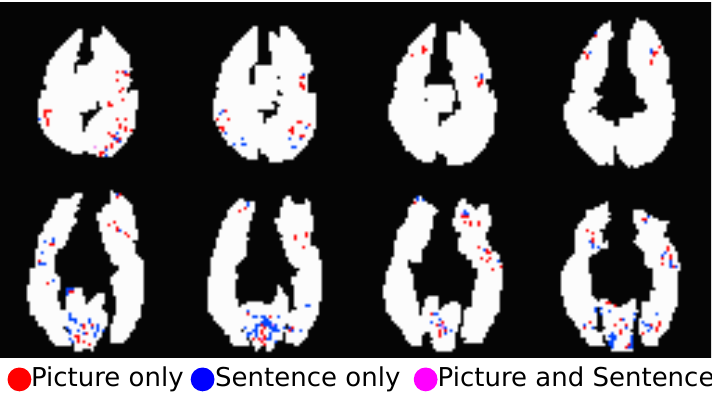} \label{all}}%
\end{tabular} 
  \end{minipage} \hfill
  \begin{minipage}[c]{0.39\textwidth}
\caption{Results from fMRI experiments. \subref{fmrifig} Aggregated sparsity patterns for a single brain slice. \subref{errperperson} Crossvalidation error obtained with each method. Lines connect data for a single subject. \subref{all} The full sparsity pattern obtained with SOSlasso.}
\label{fmri}
\centering
 \begin{tabular}{ || c |c | c || }
 \hline
 \textbf{Method} & \textbf{$\%$  ROI} & \textbf{t(5) , p} \\
  \hline                       
  lasso & 46.11& 6.08 ,0.001   \\
  \hline
  Glasso & 50.89 & 5.65 ,0.002 \\
  \hline  
  SOSlasso &  70.31 & \\
  \hline
\end{tabular}
  \captionof{table}{Proportion of selected voxels in the 7 relevant ROIS aggregated over subjects, and corresponding two-tailed significance levels for the contrast of lasso and Glasso to SOSlasso. }
   \label{tabroi}
  \end{minipage}
\end{figure}
There are three things to note from Figure \ref{fmrifig}. First, the Glasso solution is fairly dense, with many voxels signaling both picture and sentence across subjects. We believe this ``purple haze'' demonstrates why Glasso is ill-suited for fMRI analysis: a voxel selected for one subject must also be selected for all others. This approach will not succeed if, as is likely, there exists no direct voxel-to-voxel correspondence or if the neural code is variable across subjects. Second, the lasso solution is less sparse than the SOSlasso because it allows any task-correlated voxel to be selected. It leads to a higher cross-validation error, indicating that the ungrouped voxels are inferior predictors (Figure \ref{errperperson}). Third, the SOSlasso not only yields a sparse solution, but also clustered. To assess how well these clusters align with the anatomical regions thought \emph{a-priori} to be involved in sentence and picture representation, we calculated the proportion of selected voxels falling within the 7 ROIs identified by \cite{mitchellfmri} as relevant to the classification task (Table \ref{tabroi}). For SOSlasso an average of 70\% of identified voxels fell within these ROIs, significantly more than for lasso or Glasso.

\vspace{-2mm}
\section{Conclusions and Extensions}
\vspace{-2mm}
We have introduced SOSlasso, a function that recovers sparsity patterns that are a hybrid of overlapping group sparse and sparse patterns when used as a regularizer in convex programs, and proved its theoretical convergence rates when minimizing least squares. The SOSlasso succeeds in a multi-task fMRI analysis, where it both makes better inferences and discovers more theoretically plausible brain regions that lasso and Glasso. Future work involves experimenting with different parameters for the group and l1 penalties, and using other similarity groupings, such as functional connectivity in fMRI.

\newpage
\small
\bibliographystyle{plain}
\bibliography{NIPS_final_NR}

\newpage
\section{Appendix}

\subsection{Proofs of Lemmas and other Results}
Here, we outline proofs of Lemmas and results that we deferred in the main paper. Before we prove the results, recall that we define 
\[
h(\vx) = \inf_{\W} \sum_{G \in \G} \left( \alpha_G \| \vw_G \| + \| \vw_G \|_1 \right) ~\ \textbf{s.t. } ~\ \sum_{G \in \G} \vw_G = \vx
\]
As in the paper, we assume $\alpha_G = 1 ~\ \forall G \in \G$.

\subsubsection{Proof of Lemma 3.1 }
\begin{proof}
It is trivial to show that $h(\vx) \geq 0$ with equality \emph{iff} $\vx = 0$. We now show positive homogeneity. 
Suppose $\vw_G , ~\ G \in \G$ is an optimal decomposition of $\vx$, and let $\gamma \in \R \backslash \{\vct{0} \}$. Then, $\sum_{G \in \G} \vw_G = \vx ~\ \Rightarrow \sum_{G \in \G} \gamma \vw_G = \gamma \vx$. This leads to the following set of inequalities:
\begin{align}
h(\vx) &= \sum_{G \in \G} \left( \| \vw_G \| + \| \vw_G  \|_1 \right) 
= \frac{1}{|\gamma|} \sum_{G \in \G} \left( \| \gamma \vw_G \| + \| \gamma \vw_G  \|_1 \right) 
\label{oneside}
 \geq \frac{1}{|\gamma|} h(\gamma \vx)
\end{align}
Now, assuming $\vv_G, ~\ G \in \G$ is an optimal decomposition of $\gamma \vx$, we have that $\sum_{G \in \G}  \frac{\vv_G}{\gamma} = \vx$, and we get
\begin{align}
h(\gamma \vx) &= \sum_{G \in \G} \left( \| \vv_G\| + \| \vv_G \|_1  \right) 
= |\gamma| \sum_{G \in \G} \left( \left\| \frac{\vv_G}{\gamma} \right\| + \left\| \frac{\vv_G}{\gamma} \right\|_1  \right) 
\label{otherside}
\geq |\gamma| h(\vx)
\end{align}
Positive homogeneity follows from (\ref{oneside}) and (\ref{otherside}). The inequalities are a result of the possibility of the vectors not corresponding to the respective optimal decompositions.

For the triangle inequality, again let $\vw_G , \vv_G $ correspond to the optimal decomposition for $\vx , \vy$ respectively. Then by definition, 
\begin{align*}
h(\vx + \vy) &\leq \sum_{G \in \G} (\|\vw_G + \vv_G\| + \|\vw_G  +  \vv_G \|_1) \\
&\leq \sum_{G \in \G} (\|\vw_G \| + \|\vv_G\| + \|\vw_G\|_1  + \| \vv_G \|_1) \\
&= h(\vx) + h(\vy)
\end{align*}
The first and second inequalities follow by definition and the triangle inequality respectively.
\end{proof}

\subsubsection{Proof of Lemma 3.3}
\begin{proof}
Let $\va \in sA$ and $\vb \in sB^\perp$ be two vectors. Let $\vw^A \text{ and } \vw^B$ correspond to the vectors in the optimal decompositions of $\va \text{ and } \vb$ respectively. Note that $S \subset \bigcup_{G \in \G^\star} G$. Since the vectors $\vw^A \text{ and } \vw^B$ are the optimal decompositions, we have that none of the supports of the vectors $\vw^A$ overlap with those in $\vw^B$. Hence, 

\begin{align*}
h(\va) + h(\vb) &=  \sum_{G \in \G^\star} \left( \| \vw^A_G \| + \| \vw^A_G  \|_1\right) + \sum_{G \in \G} \left( \| \vw^B_G \| + \| \vw^B_G  \|_1 \right) \\
&= \sum_{G \in \G}\left(  \| \vw^A_G \| + \| \vw^B_G \| + \|\vw^A_G \|_1 + \|\vw^B_G \|_1  \right)  = h(\va + \vb)
\end{align*}
This proves decomposability of $h(\cdot)$ over the subsets $sA$ and $sB$.
 \end{proof}

\subsection{More Motivation and Results for the Neuroscience Application}
Analysis of fMRI data poses a number of computational and conceptual challenges. Healthy brains have much in common: anatomically, they have many of the same structures; functionally, there is rough correspondence among which structures underly which processes.  Despite these high level commonalities, no two brains are identical, neither in their physical form nor their functional activity.  Thus, to benefit from handling a multi-subject fMRI dataset as a multitask learning problem, a balance must be struck between similarity in macrostructure and dissimilarity in microstructure.


Standard multi-subject analysis involve voxel-wise ``massively univariate'' statistical methods that test explicitly, independently at each datapoint in space, if that point is responding in the same way to the presence of a stimulus.  To align voxels somewhat across subjects, each subject's data is co-registered to a common atlas, but because only crude alignment is possible, datasets are also typically spatially blurred so that large scale region level effects are emphasized at the expense of idiosyncratic patterns of activity at a finer scale. This approach has many weaknesses, such as it's blindness to the multivariate relationships among voxels, its reliance on unattainable alignment, and subsequent spatial blurring that restricts analysis to very coarse descriptions of the signal---problematic because it is now well established that a great deal of information is carried within these local distributed patterns \cite{haxby}.

Mutltitask learning has the potential to address these problems, by leveraging information across subjects in some way while discovering multivariate solutions for each subject.  However, if the method requires that an identical set of features be used in all solutions, as with standard group lasso (Glasso; \cite{yuanlin}), then the same problems with alignment and non-correspondence of voxels across subjects are confronted. In the main paper, we demonstrate this issue.

Sparse group lasso \cite{sgl} and our extension, sparse overlapping sets lasso, were motivated by these multitask challenges in which similar but not identical sets of features are likely important across tasks.  SOSlasso addresses the problem by solving for a sparsity pattern over a set of arbitrarily defined and potentially overlapping groups, and then allowing unique solutions for each task that draw from this sparse common set of groups.  A related solution to the same problem is proposed in \cite{jenattonhierarchicalfmri}. 

\subsubsection{Additional Experimental Results}

\begin{figure}[!t]
\begin{center}
\subfigure[LASSO]{
\includegraphics[width = 60mm, height = 40mm]{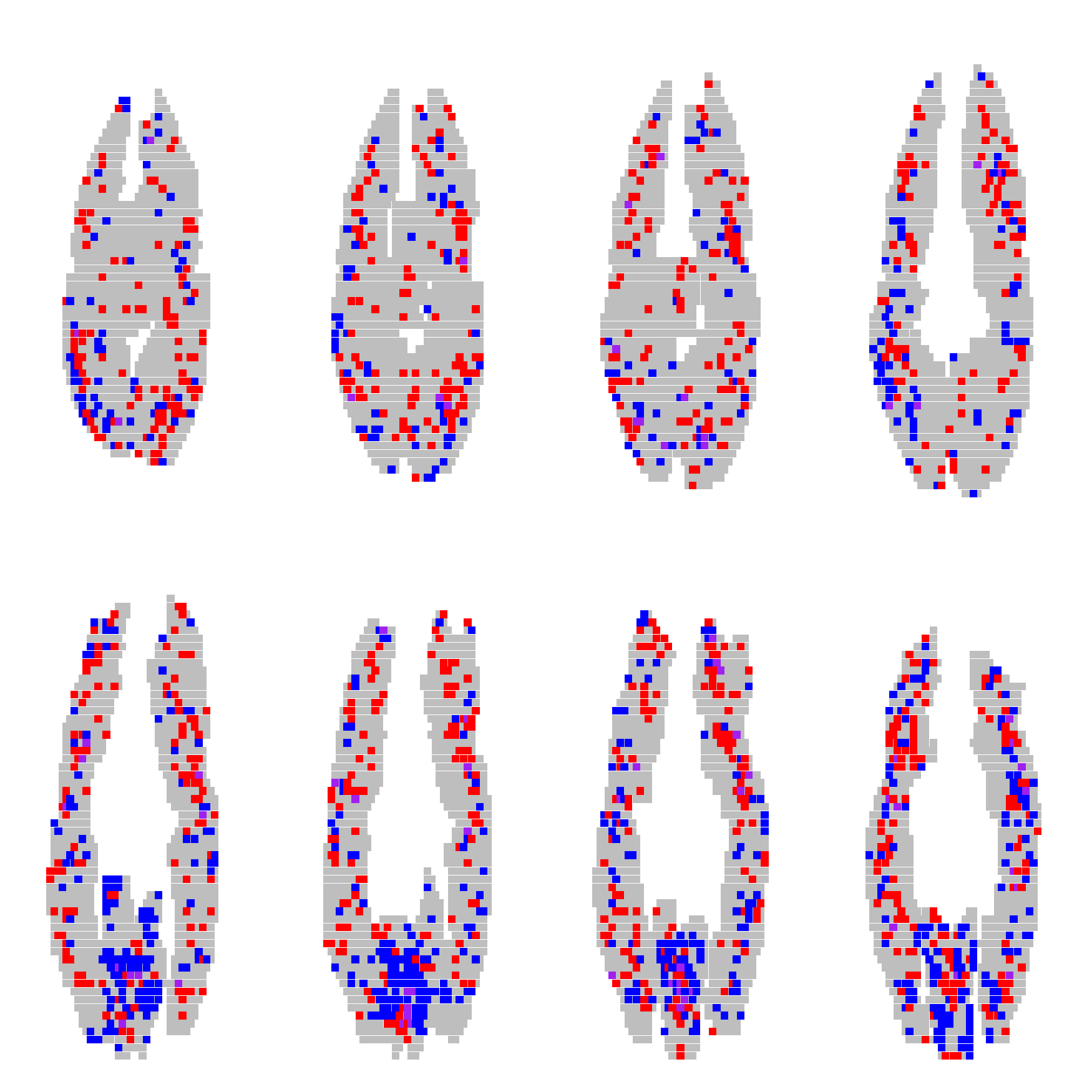}
\label{lasso}
}
\subfigure[Glasso]{
\includegraphics[width = 60mm, height = 40mm]{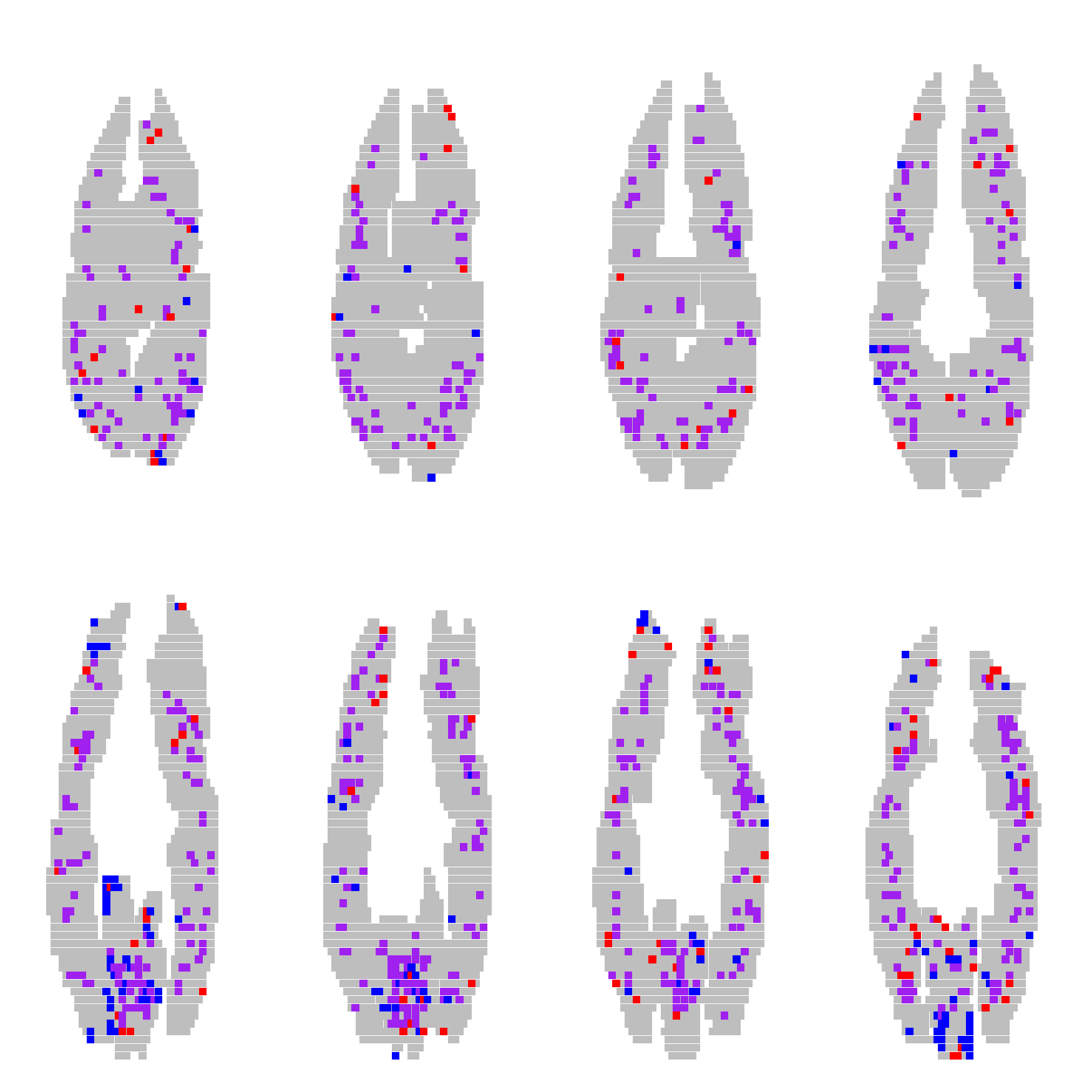}
\label{Glasso}
}
\subfigure[Histogram of the selected coefficients in Glasso. No coefficient is 0, but a majority are nearly 0]{
\includegraphics[width = 60mm, height = 40mm]{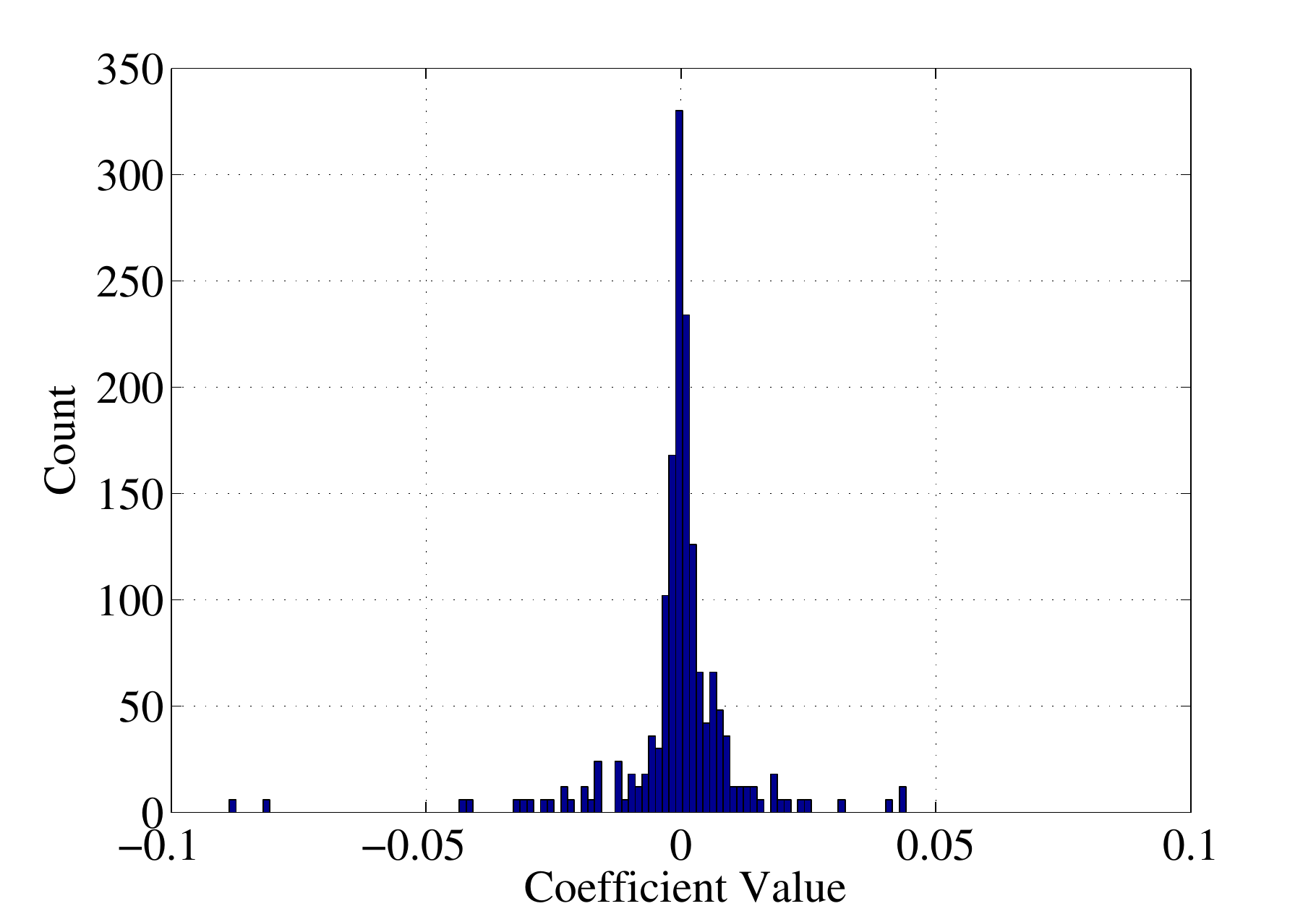}
\label{histGlasso}
}
\subfigure[SOSlasso]{
\includegraphics[width = 60mm, height = 40mm]{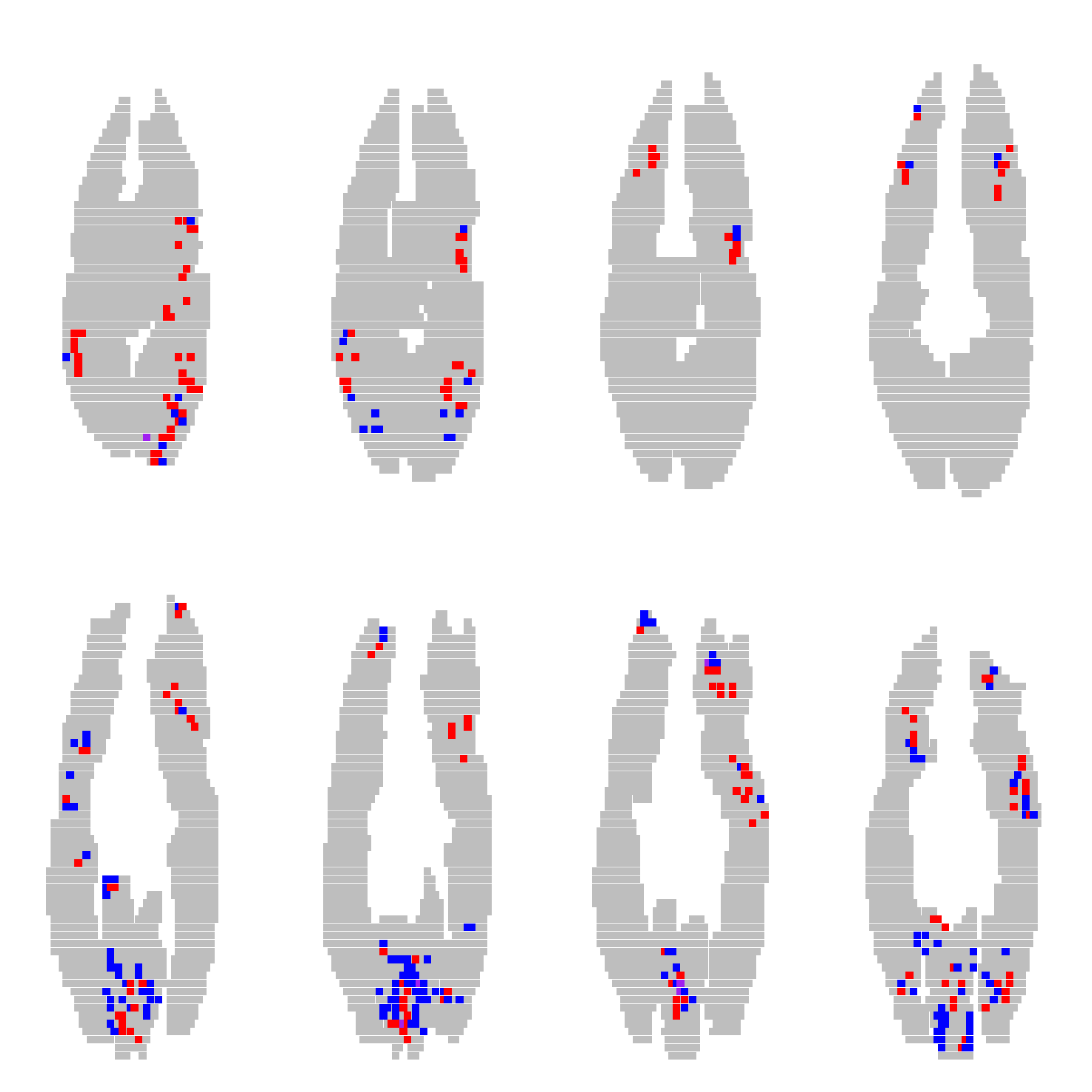}
\label{SOSlasso}
}
\caption{Per-slice result of the aggregated sparsity patterns across 6 subjects. }
\label{allslices}
\end{center}
\end{figure}

We trained a classifier using 4-fold cross validation on the star plus dataset \cite{mitchellfmri}. Figure \ref{allslices} shows the discovered sparsity patterns in their entirety for the three methods considered, projected into a brain space that is the union over all size subjects; anatomical data was not available. In each slice, we aggregate the data for all the 6 subjects. Red points indicate voxels that contributed positively to picture classification in at least one subject, but never to sentences; Blue points have the opposite interpretation. Purple points indicate voxels that contributed positively to picture and sentence classification in different subjects.

The following observations are to be noted from Figure \ref{allslices}. The lasso solution (Figure \ref{lasso}) results in a highly distributed sparsity pattern across individuals. This stems from the fact that the method does not explicitly take into account the similarities across brains of individuals, and hence does not look to ``tie" the patterns together. Since the alignment is not perfect across brains, the 6 resulting patterns when aggregated result in a distributed pattern, and the largest error among the methods tested. 

The Glasso (Figure \ref{Glasso}) for multitask learning ties a single voxel across 6 subjects into a single group. If a particular group is active, then all the coefficients in the group are active. Hence, if a particular voxel in a particular subject is selected, then the same $(i,j,k)$ location in another subject will also be selected. This forced selection of voxels results in many coefficients that are almost but not exactly 0, and random signs as can be seen from the histogram of the selected voxels in Figure \ref{histGlasso}. 

The lasso with Sparse Overlapping Sets (Figure \ref{SOSlasso}) overcomes the drawback of the Glasso by not forcing all the voxels at a particular location to be active. 
Also, since we consider $5\times5\times1$ groups here, we also tend to group voxels that are spatially clustered. This results is selecting voxels in a subject that are ``close-by" (in a spatial sense) to voxels in other subjects. The result is a more clustered sparsity pattern compared to the lasso, and very few ambiguous voxels compared to the Glasso. 

\begin{figure}[!t]
\begin{center}
\includegraphics[width = 80mm]{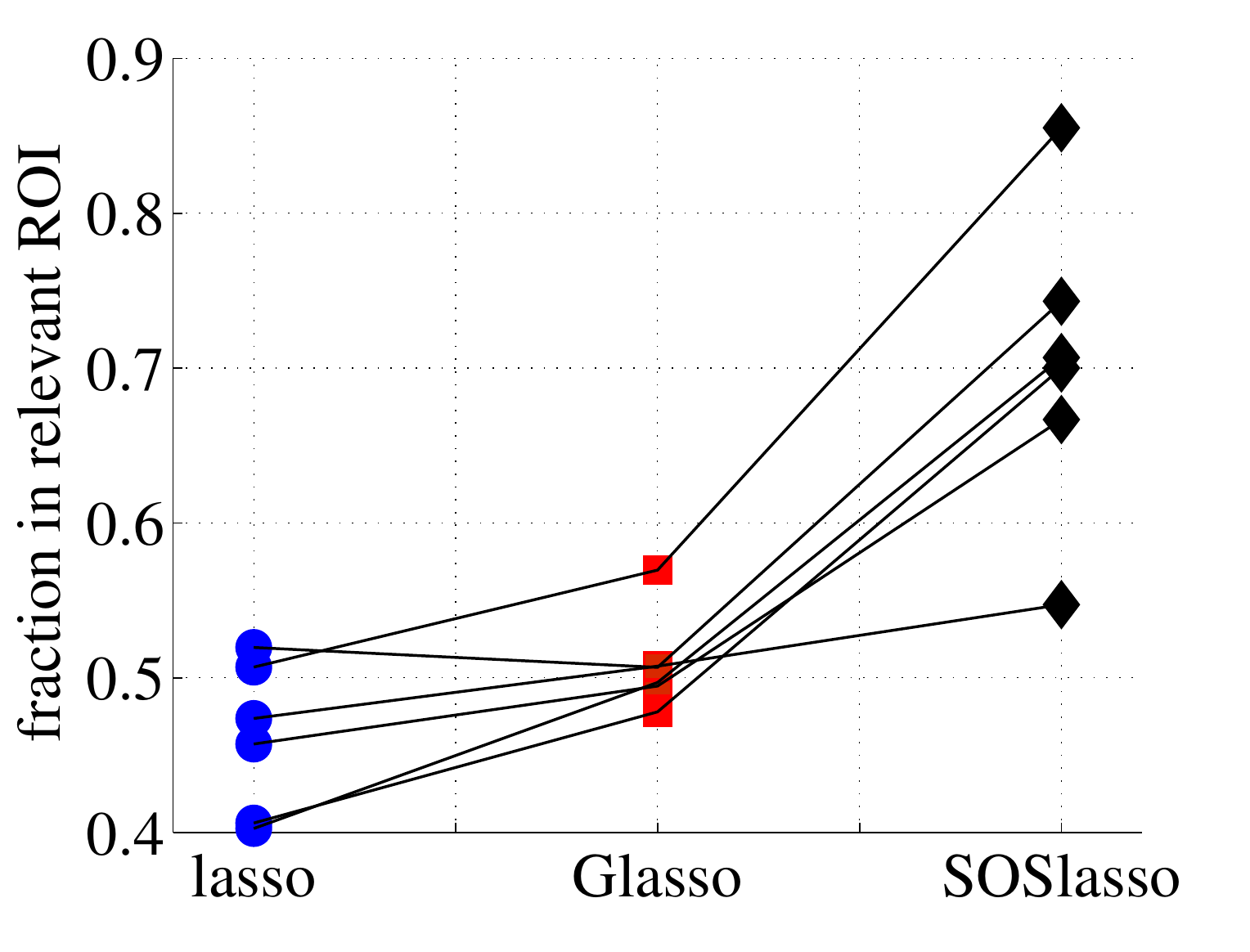}
\caption{The proportion of discovered voxels that belong to the 7 pre-specified regions of each subject's brain that neuroscientists expect to be especially involved with the current study. These 7 regions encompass 40\% of the voxels of each subject, on average, and can be interpreted as chance.}
\label{fig:proportionInROI}
\end{center}
\end{figure}

The mere fact that we specify groups of colocated voxels does not account for the fact that we discovered clear sparse-group structure.  Indeed, we trained the latent group lasso \cite{jacob} with the same group size ($5\times5\times1$ voxels) and absolutely no structure was recovered, and classification performance was near chance (45\% error, relative to chance 50\%).  It fails because of it's inflexibility with respect to the voxels within groups.  If a group is selected, all the voxels contained must be utilized by all subjects.  This forces many detrimental voxels into individual solutions, and leads to no group out performing any others.  As a result, almost all groups are activated, and the feature selection effort fails. SOSlasso succeeds because it allows task-specific within group sparsity, and because, by allowing overlap, the set of groups is larger. This second factor reduces the chance that the informative regions of the brain are not well captured in any group.

An advantage of using this dataset is that each subject's brain was been partitioned into 24 regions of interests, and expert neuroscientists identified 7 of these regions in particular that ought to be especially involved in processing the pictures and sentences in this study \cite{mitchellfmri}.  No one expects that every neural unit in these regions behave the same way, and that identical sets of these neural units will be involved in different subject's brains as they complete the study. But it \emph{is} reasonable to expect that there will be \emph{similar} sparse sets voxels in these regions across subjects that are useful to classifying the kind of stimulus being viewed.  Because the signal is sparse within subjects, and because spatially similar voxels may be more correlated than spatially dissimilar voxels, standard lasso without multitask learning will miss this structure; because not all voxels within these regions are relevant in all subjects, standard Glasso---even Glasso set up to explicitly handle the 24 regions of interests as groups---will do poorly at recovering the expected pattern of group sparsity.  SOSlasso is expected to excel at recovering this pattern, and as we show in Figure \ref{fig:proportionInROI} our method finds solutions with a high proportion of voxels in these 7 expected ROIs, far higher than the other methods considered.

\end{document}